\documentclass[journal]{IEEEtran}
%

\usepackage{lipsum}
\usepackage{amsfonts}
\usepackage{graphicx}
\usepackage{epstopdf}
\usepackage{algorithmic}
\usepackage{lineno,hyperref}
\usepackage{amsmath,amssymb}
\usepackage{amsthm}
\usepackage{mathtools}
\usepackage{multirow}
\usepackage{latexsym}
\usepackage{cite}
\usepackage{pbalance}

\newtheorem{theo}{Theorem}
\newtheorem{Prop}[theo]{Proposition}

\newtheorem{rem}[theo]{Remark}
\newtheorem{defi}[theo]{Definition}

\newcommand{\intval}[1]{{\bf #1}}
\newcommand{\ul}{\underline}
\newcommand{\ol}{\overline}

\newcommand{\sample}{\mathcal{S}}

\DeclareMathOperator{\DSM}{\text{DSM}}

\ifCLASSINFOpdf
\else
\fi

\hyphenation{op-tical net-works semi-conduc-tor}

\begin{document}
\title{Learn and Verify: A Framework for Rigorous Verification of Physics-Informed Neural Networks}

\author{Kazuaki~Tanaka
        and~Kohei~Yatabe,~\IEEEmembership{Member,~IEEE}
        
\thanks{K. Tanaka is with Global Center for Science and Engineering, Waseda University, Tokyo, 169-8555 Japan (e-mail: tanaka@ims.sci.waseda.ac.jp).}
\thanks{K. Yatabe is with Faculty of Engineering, Tokyo University of Agriculture and Technology, Tokyo, 184-8588 Japan (e-mail: yatabe@go.tuat.ac.jp).\vspace{1.5em}}
}

\IEEEoverridecommandlockouts
\IEEEpubid{\makebox[\columnwidth]{\parbox[b]{\columnwidth}{\footnotesize This work has been submitted to the IEEE for possible publication. Copyright may be transferred without notice, after which this version may no longer be accessible.}}\hspace{\columnsep}\makebox[\columnwidth]{ }}

\maketitle

\begin{abstract}
The numerical solution of differential equations using neural networks has become a central topic in scientific computing, with Physics-Informed Neural Networks (PINNs) emerging as a powerful paradigm for both forward and inverse problems. 
However, unlike classical numerical methods that offer established convergence guarantees, neural network-based approximations typically lack rigorous error bounds.
Furthermore, the non-deterministic nature of their optimization makes it difficult to mathematically certify their accuracy.
To address these challenges, we propose a ``Learn and Verify'' framework that provides computable, mathematically rigorous error bounds for the solutions of differential equations.
By combining a novel Doubly Smoothed Maximum (DSM) loss for training with interval arithmetic for verification, we compute rigorous \textit{a posteriori} error bounds as machine-verifiable proofs.
Numerical experiments on nonlinear Ordinary Differential Equations (ODEs), including problems with time-varying coefficients and finite-time blow-up, demonstrate that the proposed framework successfully constructs rigorous enclosures of the true solutions, establishing a foundation for trustworthy scientific machine learning.
\end{abstract}

\begin{IEEEkeywords}
Sub- and super-solutions,
differential equations,
rigorous enclosure,
verified numerical computation
\end{IEEEkeywords}

\IEEEpeerreviewmaketitle

\section{Introduction}
\IEEEPARstart{I}{n} recent years, neural networks have been extensively applied in scientific computing, driven by their remarkable ability to approximate complex functions.
Prominent among these approaches are Physics-Informed Neural Networks (PINNs)~\cite{raissi2019physics}, which have emerged as a powerful framework that directly integrates physical laws into the training process by penalizing the residuals of the governing differential equations within the loss function~\cite{karniadakis2021physics,cuomo2022scientific}.
Beyond PINNs, a wide range of neural network-based solvers has been developed, including the Deep Galerkin and Deep Ritz methods~\cite{sirignano2018dgm,e2018deepritz}, deep Backward Stochastic Differential Equation (BSDE) schemes~\cite{e2017deepbsde,chassagneux2020deepbsde}, and operator learning approaches such as DeepONet~\cite{lu2021deeponet} and Fourier Neural Operators~\cite{li2021fno}.

Despite these successes, a fundamental challenge remains: neural network-based solutions lack rigorous error bounds.
While classical numerical methods, such as finite element or finite difference schemes, offer well-established convergence guarantees, solutions derived via deep learning exhibit stochastic variability and generally lack mathematical certification of accuracy.
Consequently, establishing rigorous error bounds for neural network-based differential equation solvers is a critical prerequisite for ensuring reliability in scientific machine learning.
Although theoretical progress has been made in understanding the behavior of PINNs~\cite{shin2020convergence,molinaro2020pinn,jiao2022rate,masri2025unified,bonito2024consistent}, these results primarily provide asymptotic guarantees or error estimates under idealized conditions, which are often insufficient for practical verification.

In practice, deriving computable, rigorous error bounds for specific numerical solutions is essential. While deterministic \textit{a posteriori} error bounds for PINNs have been proposed~\cite{hillebrecht2022certified,liu2023residual,ahmadova2025lower}, translating these analytic results into fully \textit{machine-verifiable} guarantees remains a challenge. Crucially, the numerical realization of these analytic certificates relies on global quantities such as Lipschitz constants, strong monotonicity constants, residual envelopes, and high-order derivative bounds. In standard implementations, these quantities are often estimated from finite collocation points rather than rigorously enclosed over the continuous domain. Furthermore, evaluating these integral bounds requires strict control over quadrature and floating-point errors, which are typically assumed negligible or handled via heuristic approximations. To elevate these guarantees to the status of machine-verifiable proofs, it is necessary to rigorously account for all sources of theoretical and computational uncertainty.

To address this limitation, we propose a ``Learn and Verify'' framework to rigorously certify the approximate solutions generated by PINNs.
In classical numerical analysis, computer-assisted proofs and verified numerical computations have been extensively developed.
For instance, rigorous bounds for Ordinary Differential Equations (ODEs) are computed using Taylor series expansions and Picard iteration~\cite{lohner1987enclosing, kashiwagi1995power}, while fixed-point theorems have been successfully applied to verify solutions of Partial Differential Equations (PDEs)~\cite{nakao1988numerical, oishi1995numerical, yamamoto1998numerical, day2007validated, plum1991computer, takayasu2013verified}.
Furthermore, software packages for verified numerical computation have been established as fundamental tools in this field~\cite{rump1999intlab,kashiwagi_kv,Nedialkov2010vnode,Kapela2011capd}.
However, a framework for mathematically guaranteeing the reliability of solutions obtained by neural network-based solvers remains unexplored.
This is largely due to the absence of a methodology to bridge the approximate solutions of PINNs with traditional verified numerical computation tools.
To overcome this challenge, we propose constructing sub- and super-solutions using PINNs that rigorously enclose the true solution.

The core idea of the proposed framework is to train PINNs to satisfy specific differential inequalities, ensuring they strictly bound the true solution of the target differential equation from below (as a sub-solution) and above (as a super-solution).
By constructing these sub- and super-solutions using PINNs, we transform the verification problem into a task that is rigorously computable via interval arithmetic.
It is worth noting that this approach is rooted in the classical Intermediate Value Theorem (IVT): for a continuous function $F:\mathbb{R} \rightarrow \mathbb{R}$, if values $\ul{x}$ and $\ol{x}$ satisfy $F(\ul{x})\leq 0$ and $F(\ol{x})\geq 0$, respectively, then the existence of a solution $x$ (where $F(x)=0$) within the interval $[\ul{x},\ol{x}]$ is guaranteed.
In this paper, we extend this principle to ODEs, enabling the use of rigorous interval computation to verify both the existence of the true solution and the accuracy of its approximation.
Importantly, the verification process is independent of the generation method; thus, even non-deterministic approximations produced by ``black-box'' PINNs can be rigorously certified.

In addition to the verification framework, we propose practical learning strategies---namely, Variation Learning and Doubly Smoothed Maximum (DSM) loss---to efficiently construct verifiable candidates using PINNs.
Accordingly, the main contributions of this paper can be summarized as follows:

\begin{enumerate}
    \item
    \textbf{Framework:} We propose a two-phase ``Learn and Verify'' framework for neural differential equation solvers, which upgrades learned approximations into rigorously verified solution enclosures. Rather than relying directly on the neural network output, we construct sub- and super-solutions that provide mathematically rigorous lower and upper bounds for the true solution.
    \item
    \textbf{Learning:} We propose two specific learning strategies. Variation Learning generates structurally valid sub- and super-solutions with explicit control over the enclosure width (which determines the tightness of the error bound). The DSM loss enables robust training of these candidate functions by smoothly approximating the maximum violation of the required differential inequalities. 
    \item
    \textbf{Verification:} We develop the mathematical theory for a rigorous verification pipeline that utilizes interval arithmetic with adaptive subdivision to certify the validity of the constructed sub- and super-solutions. It explicitly accommodates piecewise $C^1$ functions essential for the verification of the existence of global-in-time solutions.
\end{enumerate}

The remainder of this paper is organized as follows. Section~\ref{sec:nn} introduces the preliminaries. Section~\ref{sec:prop} details the proposed framework and provides a step-by-step explanation of the methodology. The theoretical foundation is established in Section~\ref{sec:novelTheorems}. Section~\ref{sec:experiment} demonstrates the effectiveness of the framework through numerical experiments on three nonlinear ODEs. Finally, Section~\ref{sec:conclusion} concludes the paper.

\section{Preliminaries}\label{sec:nn}

\subsection{Deep Neural Network for PINN}

In the proposed framework, the choice of numerical solver is flexible.
Indeed, any black-box function approximator can be used to estimate the solution, provided it satisfies the regularity requirements of the differential equation.
In this paper, for simplicity, we adopt a fully-connected feedforward network with input $t \in [0, T]$ and parameters $\theta = (W^{(n)}, b^{(n)})_{n=1}^N$ defined as follows:
$u_{\theta}(t) = W^{(N)} h^{(N-1)} + b^{(N)}$, $h^{(n)} = g ( W^{(n)} h^{(n-1)} + b^{(n)} )$ $(n = 1, \dots, N-1)$, and $h^{(0)} = t$,
where $W^{(n)}$ and $b^{(n)}$ denote the weight matrix and bias vector of the $n$-th layer, respectively, $N$ is the number of layers, and $g$ is the element-wise non-linear activation function.

While any neural network architecture can be employed, we specifically utilize the SIREN architecture~\cite{sitzmann2020implicit}.
Unlike standard networks, SIRENs employ periodic sinusoidal activation functions.
This choice is motivated by the superior capability of SIRENs to model fine-grained details and accurately represent higher-order derivatives.
Since the loss function in PINNs relies directly on the derivatives of the network output, the smooth, infinitely differentiable nature of the sine function is particularly effective for minimizing differential residuals.

\subsection{Numerical Solution to ODE Approximated by PINN}
Throughout this paper, we focus on the following initial value problem (IVP) for a first-order ODE:
\begin{align}
    \label{eq:ivpode}
    \left\{
    \begin{array}{l l}
        \displaystyle\frac{du}{dt}(t) = f(t,u(t)), & t \in [0,T], \\
        u(0) = a,
    \end{array}
    \right.
\end{align}
where \(T > 0\) is a given time horizon, \(a \in \mathbb{R}\) is the initial value, and \(f: [0,T] \times \mathbb{R} \to \mathbb{R}\) is a continuous function.
%
The goal of a numerical solver is to approximate the solution $u(t)$ over the interval \([0,T]\).
Traditional methods typically achieve this by discretizing the time domain to compute the solution at specific time steps.

In contrast, a PINN approximates the solution $u$ using a neural network $u_\theta$ parametrized by weights $\theta$.
Instead of discrete time-stepping, it learns the solution globally through the following optimization problem:
\begin{align}
    \label{eq:generalLoss}
    \underset{\theta}{\text{Minimize }} \;L_{\text{ODE}}(\theta) + \lambda \, L_{\text{IV}}(\theta),
\end{align}
where the loss functions are defined as
\begin{align}
    L_{\text{ODE}}(\theta) &= \frac{1}{|\mathcal{S}|} \sum_{t \in \sample} \left(\frac{du_\theta}{dt}(t) - f(t,u_\theta(t))\right)^2, \label{eq:Lode}\\
    L_{\text{IV}}(\theta) &= \left(a - u_\theta(0)\right)^2, \label{eq:Liv}
\end{align}
$\lambda > 0$ is a hyperparameter, and $\sample$ denotes a set of sampling points (collocation points) randomly drawn from the domain $[0,T]$.
Note that $\sample$ is typically resampled or varied during the optimization process to train the neural network, ensuring the loss approximates the integral over the entire domain.

By solving the optimization problem \eqref{eq:generalLoss}, the neural network $u_\theta$ aims to approximate the solution to \eqref{eq:ivpode}.
However, the obtained approximations often exhibit significant variance due to the stochastic nature of the training process (e.g., random weight initialization and stochastic collocation sampling).
Consequently, the accuracy of any specific approximation remains uncertain and lacks mathematical certification.

\subsection{Sub- and Super-Solutions of \eqref{eq:ivpode}}

To guarantee the reliability of an approximate solution, sub- and super-solutions are constructed in the proposed framework.
For problem \eqref{eq:ivpode}, they are defined as follows:
	
\begin{defi}
	A pair of functions $\ul{u},\ol{u} \in C^1([0,T])$ satisfying
	\begin{align}
			\label{eq:sub}
			&\left\{\begin{array}{l l}
					\displaystyle\frac{d\ul{u}}{dt}(t) \leq f(t,\ul{u}(t)),~t \in (0,T)\\
					\ul{u}(0) \leq a,\\
				\end{array}\right.
        \\[4pt]
			\label{eq:super}
			&\left\{\begin{array}{l l}
					\displaystyle\frac{d\ol{u}}{dt}(t) \geq f(t,\ol{u}(t)),~t \in (0,T)\\
					\ol{u}(0) \geq a\\
				\end{array}\right.
		\end{align}
	is called a pair of sub-solution $\ul{u}$ and super-solution $\ol{u}$ of \eqref{eq:ivpode}.
\end{defi}

\begin{rem}
	When equality holds in \eqref{eq:sub} or \eqref{eq:super}, $\ul{u},\ol{u}$ coincide with the solution $u$ of \eqref{eq:ivpode}.
    Thus, this definition is a natural and intuitive generalization of the classical solution concept.
    However, we can relax the regularity of $\ul{u},\ol{u}$; their differentiability can be piecewise, as we will discuss later in Section~\ref{sec:novelTheorems}.
\end{rem}

Provided that valid sub- and super-solutions are constructed, the exact solution to \eqref{eq:ivpode} is guaranteed to be enclosed within the interval defined by them.
Furthermore, the tightness of this bounding interval directly indicates the accuracy of the approximate solution.

\subsection{Interval Arithmetic for Verification}\label{subsec:intervalArithmetic}

The verification of sub- and super-solutions requires rigorous evaluation of the differential inequalities in \eqref{eq:sub} and \eqref{eq:super}.
Since standard numerical computations are conducted within the limitations of finite precision arithmetic, the computed results inevitably deviate from the exact values due to rounding errors.
However, by deliberately controlling the rounding mode of the numerical computations, it is possible to obtain evaluations that rigorously bound the true value. 

The IEEE 754 standard, which defines floating-point arithmetic and its basic operations, prescribes four rounding methods: round-to-nearest, truncation, rounding up (towards \(+\infty\)), and rounding down (towards \(-\infty\)).
By utilizing the latter two directed rounding modes, we can enclose arithmetic operations within rigorous bounds.
Specifically, for any real arithmetic operation \(\diamond \in \{+, -, \times, /\}\), we can obtain rigorous upper and lower bounds for the result as follows.

To compute rigorous enclosure, interval arithmetic is used, i.e., simultaneously computing two real numbers to obtain both upper and lower bounds of the true value. 
Let $\intval{x} = [\ul{x},\ol{x}]:=\{x \in \mathbb{R} : \ul{x}\leq x \leq\ol{x} \}$. 
Given intervals \(\intval{x}=[\ul{x},\ol{x}]\) and \(\intval{y}=[\ul{y},\ol{y}]\), the elementary arithmetic operations are defined as
\begin{align*}
	&\intval{x}+\intval{y}=[\ul{x}+\ul{y},~\ol{x}+\ol{y}],\\
	&\intval{x}-\intval{y}=[\ul{x}-\ol{y},~\ol{x}-\ul{y}],\\
	&\intval{x} \times \intval{y} = [\min\{\ul{x}\ul{y},\ul{x}\ol{y},\ol{x}\ul{y},\ol{x}\ol{y}\},\max\{\ul{x}\ul{y},\ul{x}\ol{y},\ol{x}\ul{y},\ol{x}\ol{y}\}],\\
	&\intval{x}/\intval{y} = \intval{x} \times [1/\ol{y},~1/\ul{y}]~~\text{(provided \(0 \notin \intval{y}\))}.
\end{align*}
In computer implementations, when computing the lower (or upper) bound of an interval, downward (or upward) rounding is strictly applied.
This procedure ensures that the resulting machine interval always encloses the true mathematical result, preserving the inclusion property (or rigorous containment) despite the limitations of finite precision arithmetic.

Several software libraries for interval arithmetic have been developed, including C-XSC~\cite{Klatte1993cxsc}, MPFI~\cite{revol2002mpfi}, and kv~\cite{kashiwagi_kv}.
In this paper, we employ INTLAB (INTerval LABoratory)~\cite{rump1999intlab} for all rigorous computations, owing to its rich functionality and ease of use.
For a comprehensive treatment of interval arithmetic, we refer the reader to texts such as \cite{neumaier1990interval,moore2009introduction}.

\section{Proposed Framework: Learn and Verify}\label{sec:prop}
For a given approximate solution to an IVP \eqref{eq:ivpode}, we propose a framework to rigorously verify its accuracy. 
This section focuses on detailing the procedural steps of the proposed framework, while the theoretical foundation is deferred to the subsequent section to maintain clarity and flow.

\begin{figure}
    \centering
    \includegraphics[width=1.0\linewidth]{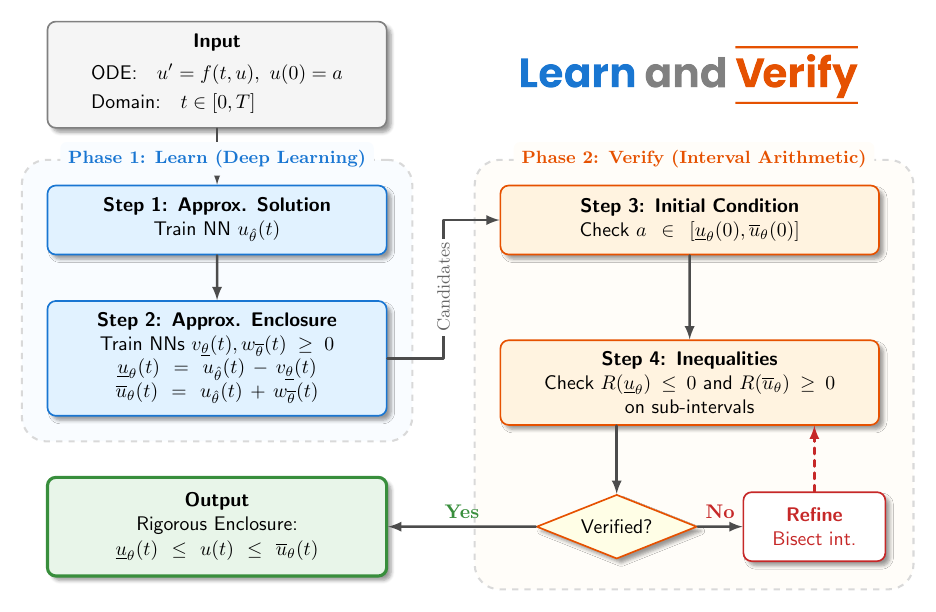}
    \caption{Overview of the proposed framework, ``Learn and Verify.''}
    \label{fig:overview}
\end{figure}

\subsection{Overview of Proposed Framework}

Fig.~\ref{fig:overview} provides a schematic overview of the proposed framework, named ``Learn and Verify.''
It consists of two distinct phases: \textit{Learn phase} and \textit{Verify phase}.
In the Learn phase, three PINNs are trained to construct an approximate solution, along with its corresponding sub- and super-solutions.
Subsequently, the Verify phase rigorously validates the correctness of these enclosing functions.

The \textit{Learn phase} proceeds in two sequential steps to construct the approximate solution and its enclosing bounds.
\begin{itemize}
    \item 
    \textbf{Step 1 (Approximate Solution):} A primary approximate solution $u_\theta(t)$ for IVP \eqref{eq:ivpode} is constructed using standard PINN methodologies. This step is flexible and can employ any existing method.
    \item 
    \textbf{Step 2 (Approximate Enclosure):} Two neural networks with non-negative outputs, $v_{\ul{\theta}}(t)$ and $w_{\ol{\theta}}(t)$, are trained to form the approximate bounds $\ul{u}_\theta(t) = u_{\hat{\theta}}(t) - v_{\ul{\theta}}(t)$ and $\ol{u}_\theta(t) = u_{\hat{\theta}}(t) + w_{\ol{\theta}}(t)$ that satisfy the differential inequalities \eqref{eq:sub} and \eqref{eq:super}, respectively. These candidate functions are validated in the subsequent Verify phase.
\end{itemize}

The \textit{Verify phase} validates the candidate sub- and super-solutions using interval arithmetic, which comprises two steps.
\begin{itemize}
    \item 
    \textbf{Step 3 (Initial Condition Verification):} First, the validity of the initial condition is confirmed. This is a straightforward check performed by evaluating $\underline{u}_\theta(0)$ and $\overline{u}_\theta(0)$ against the initial value $a$.
    \item 
    \textbf{Step 4 (Differential Inequality Verification):} Second, the validity of the differential inequalities is rigorously checked via an adaptive subdivision strategy. The computational domain is partitioned into intervals to bound the residual error. If the verification fails for a specific interval, that interval is further subdivided, and the validation is performed again. This process terminates when the verification succeeds for all intervals, or when a violation of the inequalities is detected within an interval.
\end{itemize}

These four steps are detailed in the following subsections.

\subsection{Step 1: Approximate Solution}\label{subsec:step1}

The first step is the same as the standard PINN methodologies.
Thus, any suitable network architecture may be employed to parameterize the approximate solution $u_\theta(t)$.
In this subsection, we focus on three aspects relevant to ODEs: initial condition, domain sampling, and regularization.

For the initial condition, there are two primary approaches.
The first is the penalty-based approach (soft constraint), where a penalty term $L_{\text{IV}}(\theta) = (a - u_\theta(0))^2$ is added to the cost function.
The second is a hard constraint approach, where the neural network solution is structured to satisfy the initial condition by construction.
This is typically achieved using the form $u_\theta(t) = a + g_\theta(t)$, where $g_\theta(t)$ is a neural network designed such that $g_\theta(0) = 0$ (e.g., $g_\theta(t) = t \cdot h_\theta(t)$).
Both approaches are permissible within the proposed Learn and Verify framework.
Note that the hard constraint approach guarantees the initial value is met by design, thereby allowing us to skip Step 3 (Initial Condition Verification).

For the sampling set $\sample$, a simple approach is uniform random sampling across the entire domain $[0,T]$.
Alternatively, more sophisticated strategies such as region-based (stratified) sampling may be employed, wherein the domain is divided into multiple equal sub-regions and points are sampled uniformly from each of them to ensure balanced coverage.

For regularization, we introduce a penalty term to guide the network toward physically plausible solutions:
\begin{align}
\label{eq:penalty}
L_{\text{Phys}}(\theta) = \max\left(0, \frac{1}{|\sample|}\sum_{t \in \sample} \frac{\partial f}{\partial u}(t,u_\theta(t))\right).
\end{align}
This function penalizes the model when the averaged partial derivative is positive, which corresponds to potentially unstable solution behavior%
\footnote{
The motivation for the penalty function \eqref{eq:penalty} arises from linear stability analysis.
When a solution to the ODE is perturbed as $u(t) + \varepsilon(t)$, the dynamics follow $d(u+\varepsilon)/dt = f(t,u+\varepsilon)$. 
By linearizing the right-hand side as $du/dt+d\varepsilon/dt \approx f(t,u) + (\partial f/\partial u) \, \varepsilon$ and subtracting the original equation $du/dt = f(t,u)$, an ODE for the perturbation $d\varepsilon/dt \approx (\partial f/\partial u) \, \varepsilon$ is obtained.
Since the solution to the standard linear ODE $d\varepsilon/dt = \lambda\varepsilon$ is given by $\varepsilon(t) = \varepsilon(0)\,e^{\lambda t}$, the condition $\partial f/\partial u < 0$ ensures that perturbations decay over time, thereby providing stability to the ODE solution.
}.
This regularization technique can result in dramatic improvements, particularly when optimizing approximate solutions, as will be demonstrated in Section \ref{sec:experiment}.

\subsection{Step 2: Approximate Enclosure}\label{subsec:step2}

In the proposed framework, constructing 
two functions that strictly enclose the approximate solution $u_{\hat{\theta}}(t)$ is essential.
While it is possible to train two independent neural networks, 
guaranteeing the enclosure property (i.e., non-intersection) would require additional penalty terms in the loss function.
To avoid this and to explicitly control the width of the bounds, we propose constructing these bounds as deviations from $u_{\hat{\theta}}(t)$.

Let $v_{\ul{\theta}}(t)$ and $w_{\ol{\theta}}(t)$ denote neural networks constrained to have non-negative outputs.
We propose to construct the candidates of the sub-solution (lower bound) $\ul{u}_\theta$ and super-solution (upper bound) $\ol{u}_\theta$ as deviations from $u_{\hat{\theta}}(t)$:
\begin{align}
    \ul{u}_\theta(t) &= u_{\hat{\theta}}(t) - v_{\ul{\theta}}(t), \\
    \ol{u}_\theta(t) &= u_{\hat{\theta}}(t) + w_{\ol{\theta}}(t).
\end{align}
To enforce non-negativity and control the width of the bounds structurally, we employ a scaled sigmoid function for the output layers of $v_{\ul{\theta}}$ and $w_{\ol{\theta}}$.
Specifically, the final layer is defined as $\varepsilon \sigma(\cdot)$, where $\varepsilon > 0$ is a user-defined tolerance parameter, and $\sigma(\cdot)$ is the sigmoid function.
This architectural choice offers two advantages.
First, it automatically restricts the deviation magnitude to the range $(0, \varepsilon)$, ensuring the bounds remain tight around the approximate solution $u_{\hat{\theta}}(t)$.
Second, it simplifies the optimization process, as non-negativity and range constraints are satisfied by construction. 

The neural networks $v_{\ul{\theta}}(t)$ and $w_{\ol{\theta}}(t)$ are trained such that the approximate sub- and super-solutions, $\ul{u}_\theta$ and $\ol{u}_\theta$, satisfy the differential inequalities defined in \eqref{eq:sub} and \eqref{eq:super}, respectively.
To achieve this, we propose a novel loss function designed based on the following motivation.
From \eqref{eq:sub} and \eqref{eq:super}, the requirements for sub- and super-solutions can be rewritten as
\begin{align} 
    R(\ul{u}_\theta(t)) = \frac{d\ul{u}_\theta}{dt}(t) - f(t,\ul{u}_\theta(t)) &\leq 0, \label{eq:res_sub}\\
    R(\ol{u}_\theta(t)) = \frac{d\ol{u}_\theta}{dt}(t) - f(t,\ol{u}_\theta(t)) &\geq 0, \label{eq:res_super}
\end{align}
where $R(\cdot)$ represents the residual.
Therefore, our objective is to train $\ul{u}_\theta$ (resp.\ $\ol{u}_\theta$) such that the positive part (resp.\ negative part) of the residual is minimized (resp.\ maximized) to zero, thereby ensuring the residual itself become non-positive (resp.\ non-negative).
A naive choice for this objective would be to minimize $\max_t\,(R(\ul{u}_\theta))_+$ (resp.\ $\max_t\,(-R(\ol{u}_\theta))_+$), where $(\cdot)_+ = \max(\cdot,0)$ denotes the positive part. 
However, this function is difficult to minimize due to non-smoothness and the vanishing gradient problem.
Furthermore, it imposes a relatively weak penalty on inequality violations because small violations result in negligible loss values that are easily overshadowed by other terms. 
To circumvent these difficulties, we approximate the objective using smooth functions that provide larger gradients even for small violations.

To realize this, we propose the Doubly Smoothed Maximum (DSM) function defined as follows:
\begin{align}
\label{dsm}
\DSM_{c_1,c_2} \left[ g(t) \right] = c_2 \log \left( \sum_{t \in \mathcal{S}} \left( 1 + \exp\left( \frac{g(t)}{c_1} \right) \right)^{\frac{c_1}{c_2}} \right). 
\end{align}
This serves as a differentiable approximation of $\max_t\,(g(t))_+$ by composing the Softplus and Log-Sum-Exp functions.
Here, $c_1$ acts as the smoothing parameter for the Softplus component; the function $c_1 \log(1 + \exp(x/c_1))$ converges to $(\cdot)_+$ (i.e., ReLU) as $c_1 \rightarrow 0$.
Similarly, $c_2$ controls the approximation accuracy of the Log-Sum-Exp component; the function $c_2 \log(\sum_{x}\exp(x/c_2))$ converges to $\max(\cdot)$ as $c_2 \rightarrow 0$.

Using the DSM functions, we propose the following loss function for training the deviation networks $v_{\ul{\theta}}(t)$ and $w_{\ol{\theta}}(t)$:
\begin{align}
    \label{eq:costfuncsubsuper}
    L_{\text{Sub{\&}Sup}}(\ul{\theta},\ol{\theta}) = &\DSM_{c_1,c_2} \left[ R(u_{\hat{\theta}}(t) - v_{\ul{\theta}}(t)) \right] \nonumber\\
    + &\DSM_{c_1,c_2} \left[ -R(u_{\hat{\theta}}(t) + w_{\ol{\theta}}(t)) \right].
\end{align}
Beyond facilitating optimization through smoothness, this loss function offers a distinct advantage via its inherent soft margin effect.
Unlike the naive formulation, which yields zero gradients upon satisfaction, the Softplus approximation imposes a non-zero penalty even when the inequalities are marginally satisfied.
This characteristic actively drives the optimization to enforce strict inequalities, effectively creating a safety buffer around the solution boundaries.
This buffer is critical for training stability, as stochastic collocation sampling is unlikely to capture the exact location of the maximum residual violation.
Indeed, our extensive preliminary experiments confirmed this necessity; we were unable to obtain valid sub- and super-solutions without the use of this smooth approximation.

It is worth noting that the parameter $c_2$ in DSM is mathematically related to the sample size $|\sample|$. This relationship might provide a theoretical basis for selecting an appropriate value for $c_2$. We provide details of this property in Appendix~\ref{sec:proofDSM}. Additionally, practical application of DSM requires specific implementation strategies to mitigate numerical instability (e.g., arithmetic overflow). The details are given in Appendix~\ref{sec:stableDSM}.

Since the training process relies on minimizing an empirical loss rather than strictly enforcing constraints, the constructed candidates $\ul{u}_\theta$ and $\ol{u}_\theta$ may not fully satisfy the differential inequalities defined in \eqref{eq:sub} and \eqref{eq:super} throughout the entire domain.
Therefore, they must be subjected to rigorous verification in the subsequent steps to certify their validity as strict bounds.

\subsection{Step 3: Initial Condition Verification}

Prior to the rigorous verification, we verify the initial condition by simply checking if the true initial value $a$ is contained in $[\ul{u}_\theta(0),\ol{u}_\theta(0)]$ using the rigorous interval arithmetic.

Note that this step can be skipped if the approximate solution $u_{\hat{\theta}}$ strictly satisfies the initial condition $u_{\hat{\theta}}(0)=a$ (e.g., by using the hard constraint approach). This simplification is a direct benefit of constraining the deviations $v_{\ul{\theta}}$ and $w_{\ol{\theta}}$ to be non-negative, as detailed earlier.

\subsection{Step 4: Differential Inequality Verification}

The final and most critical step is the verification of the candidate sub- and super-solutions.
As will be provided in Section \ref{sec:novelTheorems}, the validity of these functions serves as a sufficient condition for the existence of a true solution to the original problem within the enclosed region.
This subsection focuses exclusively on the computational procedure required to rigorously certify the inequalities, deferring the theoretical proofs to the subsequent section.
Specifically, our goal is to verify that the residuals $R(\ul{u}_\theta(t))$ and $R(\ol{u}_\theta(t))$ strictly satisfy the inequalities \eqref{eq:res_sub} and \eqref{eq:res_super} for all $t \in [0, T]$.

We begin by briefly reviewing the application of interval arithmetic (introduced in Section \ref{subsec:intervalArithmetic}) to the verification of inequalities.
For a given interval $\mathcal{I} = [a,b]$ and a continuous function $g \in C(\mathcal{I})$, our objective is to determine the exact range  $g(\mathcal{I}) = [\inf_{t\in \mathcal{I}}g(t), \sup_{t\in \mathcal{I}}g(t)]$.
However, computing this exact range is often intractable for general nonlinear functions.
Interval arithmetic provides a rigorous methodology to bound this range using finite-precision arithmetic.
By utilizing directed rounding modes during computation, we can calculate an enclosure interval $\mathcal{E} \subset \mathbb{R}$ that is guaranteed to contain the true image, satisfying $g(\mathcal{I}) \subset \mathcal{E}$.
This mechanism enables the rigorous verification of an inequality $g(t) \leq 0$ for all $t\in\mathcal{I}$ by examining the upper bound of the computed enclosure $\mathcal{E}$.
Specifically, the inequality is certified as \textit{satisfied} if $\sup\mathcal{E}\leq0$, and certified as \textit{violated} if $\inf\mathcal{E}>0$.
It is important to note that the enclosure $\mathcal{E}$ derived via interval arithmetic is conservative (i.e., loose), and therefore the indeterminate case where $\inf\mathcal{E}\leq 0 < \sup\mathcal{E}$ may frequently arise.
In such instances, the standard strategy is to subdivide the domain into smaller sub-intervals; this yields a tighter enclosure, eventually allowing for a definitive conclusion.

Building upon this interval arithmetic framework, we propose the following procedure to rigorously validate the candidate sub- and super-solutions.
\begin{enumerate}
    \item 
    The computational domain $[0,T]$ is partitioned into initial sub-intervals $([t_i,t_{i+1}])_{i=0}^{N}$. In practice, we typically employ $N=100$ uniform intervals to balance computational efficiency with verification tightness.
    \item 
    For each sub-interval, rigorous enclosures, $\ul{\mathcal{E}}_i$ and $\ol{\mathcal{E}}_i$, for the ranges of the residuals of the candidate sub- and super-solutions, $R(\ul{u}_\theta([t_i,t_{i+1}]))$ and $R(\ol{u}_\theta([t_i,t_{i+1}]))$, are computed using interval arithmetic. These enclosures satisfy the inclusion property: $R(\ul{u}_\theta([t_i,t_{i+1}])) \subset \ul{\mathcal{E}}_i$ and $R(\ol{u}_\theta([t_i,t_{i+1}])) \subset \ol{\mathcal{E}}_i$.
    \item 
    For each sub-interval, the differential inequalities \eqref{eq:sub} and \eqref{eq:super} are rigorously verified using the computed residual enclosures $\ul{\mathcal{E}}_i$ and $\ol{\mathcal{E}}_i$. The validity of the candidate sub- and super-solutions on the $i$-th interval is determined through the following case analysis:
    \begin{itemize}
        \item 
        \textbf{Sub-solution:} If $\sup\ul{\mathcal{E}}_i \leq 0$, then \textit{valid}. If $\inf\ul{\mathcal{E}}_i > 0$, then \textit{invalid}. If $\inf\ul{\mathcal{E}}_i \leq 0 < \sup\ul{\mathcal{E}}_i$, then \textit{undetermined}.
        \item 
        \textbf{Super-solution:} If $\inf\ol{\mathcal{E}}_i \geq 0$, then \textit{valid}. If $\sup\ol{\mathcal{E}}_i < 0$, then \textit{invalid}. If $\inf\ol{\mathcal{E}}_i < 0 \leq \sup\ol{\mathcal{E}}_i$, then \textit{undetermined}.
    \end{itemize}
    If any sub-interval is classified as \textit{invalid}, the verification process is immediately terminated, and the candidate solution is rejected as it fails to satisfy the corresponding differential inequality.
    \item 
    If all sub-intervals are classified as \textit{valid}, the verification procedure concludes successfully, thereby certifying the approximate solution over the entire domain. If an interval is classified as \textit{undetermined}, it undergoes bisection. The resulting smaller sub-intervals are recursively evaluated. This refinement process continues until all intervals are definitively resolved or a pre-defined maximum recursion depth is reached.
\end{enumerate}
Upon successful validation, the existence of a true solution is rigorously certified within the region enclosed by the sub- and super-solutions. Consequently, these functions establish deterministic, guaranteed error bounds for the approximate solution $u_{\hat{\theta}}$.
Conversely, if the verification process fails, it indicates that either the accuracy of the approximate solution is insufficient or the error tolerance parameter $\varepsilon$ (introduced in Section~\ref{subsec:step2}) is too small (i.e., overly strict).

\subsection{Summary of Training Neural Networks}

Although the preceding subsections have comprehensively detailed the proposed framework, we provide here a concise summary of the neural network training to ensure clarity.

\textit{Step 1 (Approximate Solution):}
In the first step, a standard PINN is trained by minimizing the loss function $L(\theta) = L_{\text{ODE}}(\theta) + \lambda_{\text{IV}} \, L_{\text{IV}}(\theta) + \lambda_{\text{Phys}} \, L_{\text{Phys}}(\theta)$, where the constituent terms are given in \eqref{eq:Lode}, \eqref{eq:Liv} and \eqref{eq:penalty}, respectively. The coefficients $\lambda_{\text{IV}},\lambda_{\text{Phys}}\geq0$ serve as regularization parameters. If the initial condition is structurally satisfied by the network architecture (i.e., the hard constraint approach), the term $L_{\text{IV}}(\theta)$ becomes identically zero and is effectively omitted from the optimization.

\textit{Step 2 (Approximate Enclosure):}
In the second step, two neural networks are trained to represent the deviations. To strictly control the magnitude of these deviations, their output layers utilize a scaled sigmoid function, $\varepsilon\sigma(\cdot)$, scaled by the user-defined error tolerance parameter $\varepsilon>0$. Although a smaller $\varepsilon$ is desirable for precision, it significantly increases the risk of validation failure, as the enclosure becomes tighter and harder to certify. These networks are optimized simultaneously by minimizing the loss function $L_{\text{Sub{\&}Sup}}(\ul{\theta},\ol{\theta})$ defined in \eqref{eq:costfuncsubsuper}, while the approximate solution $u_{\hat{\theta}}$ trained in the first step remains fixed.
The coefficients $c_1,c_2>0$ determine the shape of the loss function; smaller values encourage tighter bounds but may increase the difficulty of optimization, thereby increasing the risk of validation failure.

\section{Theorems for Theoretical Foundation}\label{sec:novelTheorems}

Constructing a rigorous enclosure of the true solution requires a theoretical foundation that guarantees the existence of a solution within a specified region, and computational methods that enable its rigorous verification.
As the computational methodology has been detailed in the preceding section, this section focuses exclusively on establishing the theoretical framework.
Specifically, we present two theorems that provide the mathematical basis for our approach%
\footnote{
To be precise, while the validity of the comparison principle under weaker regularity assumptions is acknowledged in classical literature (see, e.g., \cite{Ladde1985}), explicit proofs for the piecewise $C^1$ setting are not readily available.
We therefore state and prove this result in Theorem~\ref{cor:ode-localsol} for the sake of completeness.
In contrast, Theorem~\ref{theo:global} represents a novel contribution of this paper.
}.

\subsection{Local-in-time Solutions (Finite Time Horizon $[0,T]$)}
First, we establish the precise conditions under which the true solution is strictly enclosed by the sub- and super-solutions over a finite time horizon $[0, T]$.
The following theorem offers a generalization of the classical theory by admitting piecewise $C^1$ functions.
This extension is significant from both theoretical and practical perspectives.

\begin{theo}
	\label{cor:ode-localsol}
	Given partition points \( 0 = t_0 < t_1 < \cdots < t_n < t_{n+1} = T \).
	Let \(\ul{u}, \ol{u} \in \bigcap_{i=0}^{n} C^1([t_i,t_{i+1}])\) such that
	\begin{align}
		\label{eq:ivpode-sub-multi}
		&\left\{\begin{array}{l l}
			\displaystyle\frac{d\ul{u}}{dt}(t) \leq f(t,\ul{u}(t)), &t \in \displaystyle\bigcup_{i=0}^{n} [t_i,t_{i+1}],\\
			\ul{u}(0) \leq a,\\
		\end{array}\right.
    \\[4pt]
		\label{eq:ivpode-super-multi}
		&\left\{\begin{array}{l l}
			\displaystyle\frac{d\ol{u}}{dt}(t) \geq f(t,\ol{u}(t)), &t \in \displaystyle\bigcup_{i=0}^{n} [t_i,t_{i+1}],\\
			\ol{u}(0) \geq a.\\
		\end{array}\right.
	\end{align}
	Here, within each interval, the right-hand derivative is taken at the left endpoint, and the left-hand derivative is taken at the right endpoint.
	Assuming that \(\ul{u}(t) \leq \ol{u}(t)\) for all \( t \in [0,T] \) and that \( f \) is continuous over the set \(\{(t,u)~:~\ul{u}(t) \leq u \leq \ol{u}(t),~t \in [0,T]\}\), there exists a solution \( u \in C([0,T]) \) to \eqref{eq:ivpode} with the property that \(\ul{u}(t) \leq u(t) \leq \ol{u}(t)\) for every \( t \in [0,T] \).
\end{theo}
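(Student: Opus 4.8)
The plan is to reduce the piecewise-$C^1$ comparison statement to classical existence theory by replacing $f$ with a globally well-behaved modification and then confining the resulting trajectory between $\underline u$ and $\overline u$. First I would record the elementary structural facts: since $\underline u,\overline u \in \bigcap_i C^1([t_i,t_{i+1}])$ glue into continuous functions on $[0,T]$, both are continuous and $C^1$ on each subinterval, so the fundamental theorem of calculus applies piece by piece. Consequently the set $K = \{(t,u) : \underline u(t)\le u\le\overline u(t),\ t\in[0,T]\}$ is compact, and by hypothesis $f$ is continuous on $K$.

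Next I would introduce the clamping map $P(t,u) = \max\{\underline u(t),\min\{u,\overline u(t)\}\}$, which is continuous on $[0,T]\times\mathbb R$ and satisfies $(t,P(t,u))\in K$ for every $(t,u)$, and define the modified field $g(t,u) = f(t,P(t,u))$. Because $P$ is continuous and maps into $K$ while $f$ is continuous on the compact set $K$, the composition $g$ is continuous and bounded on the whole strip $[0,T]\times\mathbb R$, and it agrees with $f$ wherever $\underline u(t)\le u\le\overline u(t)$. Peano's existence theorem then yields a local $C^1$ solution of $u'=g(t,u)$, $u(0)=a$, and the global bound on $g$ lets me continue it to a solution on all of $[0,T]$.

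The core of the argument is the a priori enclosure $\underline u(t)\le u(t)\le\overline u(t)$. I would establish the upper bound by a barrier argument on $\phi(t) = u(t)-\overline u(t)$, which is continuous, piecewise $C^1$, and satisfies $\phi(0)=a-\overline u(0)\le 0$. Suppose $\phi(t_1)>0$ for some $t_1$, and let $t_0$ be the last time before $t_1$ at which $\phi=0$, so that $\phi>0$ on $(t_0,t_1]$. There $u>\overline u$, hence $P(t,u)=\overline u(t)$ and $u'=f(t,\overline u(t))$, whence $\phi'=f(t,\overline u(t))-\overline u'(t)\le 0$ by the super-solution inequality. Integrating this piecewise over the subintervals meeting $(t_0,t_1)$ gives $\phi(t_1)\le\phi(t_0)=0$, a contradiction; the symmetric argument on $\underline u(t)-u(t)$ yields the lower bound. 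Having confined $u$ to $K$, where $g=f$, I conclude that $u$ solves the original IVP and lies between the sub- and super-solutions.

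I expect the main obstacle to be making the comparison step rigorous across the partition points, since the classical smooth comparison theorem assumes $\overline u\in C^1$ whereas here $\overline u'$ may jump. The care required is to integrate the inequality $\phi'\le 0$ over each smooth piece separately, invoking the one-sided super-solution inequalities exactly at the endpoints where they are assumed to hold, and to rely on the continuity of $\phi$ to concatenate the pieces into $\phi(t_1)\le\phi(t_0)$; the boundedness of $g$, needed to globalize the Peano solution to all of $[0,T]$, is then routine.
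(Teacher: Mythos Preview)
Your proposal is correct and follows the same strategy as the paper: truncate $f$ to the corridor $[\underline u,\overline u]$, obtain a global solution of the modified problem via Peano, and then confine that solution between $\underline u$ and $\overline u$ by a first-crossing comparison. The only technical difference is in the confinement step---the paper perturbs to $\overline u_\varepsilon=\overline u+\varepsilon(1+t)$, derives a derivative contradiction at the first touching time, and lets $\varepsilon\to 0$, whereas you exploit that the truncated field equals $f(t,\overline u(t))$ once $u>\overline u$ and integrate $\phi'\le 0$ directly across the smooth pieces; both variants are standard and work equally well here.
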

%

\begin{proof}

We define the truncation function $\mathcal{T}u(t)$ as follows:
\begin{align}
	\mathcal{T} u(t):=\left\{\begin{array}{lll}
		\overline{u}(t) & \text { if } \quad u(t)>\overline{u}(t), \\
		u(t) & \text { if } \quad \underline{u}(t) \leq u(t) \leq \overline{u}(t), \\
		\underline{u}(t) & \text { if } \quad u(t)<\underline{u}(t).
	\end{array}\right.
\end{align}
Since \( f(t,\mathcal{T}u(t)) \) is bounded, a solution \( u \) of the IVP exists:
\begin{align}
	\left\{\begin{array}{l l}
			\displaystyle\frac{du}{dt}(t)=f(t,\mathcal{T}u(t)), &t \in [0,T],\\
			u(0)=a.\\
	\end{array}\right.
\end{align}
For \( \varepsilon>0 \), define \( \overline{u}_{\varepsilon}(t):=\overline{u}(t)+\varepsilon(1+t) \) and \( \underline{u}_{\varepsilon}(t):=\underline{u}(t)-\varepsilon(1+t) \), making it clear that \( \underline{u}_{\varepsilon}(0)<u(0)<\overline{u}_{\varepsilon}(0) \). Suppose there exists \( t_1\in(0,T] \) where \( \overline{u}_{\varepsilon} \) and \( u \) first intersect. That is, \( u(t)<\overline{u}_{\varepsilon}(t) \) is satisfied for \( t\in[0,t_1) \), and for the first time at \( t=t_1 \), \( u \) penetrates \( \overline{u}_{\varepsilon} \) from below. In this case, we have \( \mathcal{T}(u(t_1))=\overline{u}(t_1) \) because \( u(t_1)=\overline{u}_{\varepsilon}(t_1)>\overline{u}(t_1) \). Therefore,
\[
\frac{d\overline{u}_{\varepsilon}}{dt} (t_1) > \frac{d\overline{u}}{dt} (t_1) \geq f(t_1,\overline{u}(t_1)) = f(t_1,\mathcal{T}(u(t_1))) = \frac{du}{dt}(t_1).
\]
On the other hand, since \( u(t)<\overline{u}_{\varepsilon}(t) \) for \( t\in[0,t_1] \), for a sufficiently small \( h>0 \),
\begin{align*}
\frac{\overline{u}_{\varepsilon}(t_{1})-\overline{u}_{\varepsilon}(t_{1}-h)}{h}<\frac{u(t_{1})-u(t_{1}-h)}{h}
\end{align*}
holds true. Hence,
\begin{align*}
\frac{d\overline{u}_{\varepsilon}}{dt} (t_1) 
&= \lim_{h\rightarrow+0}\frac{\overline{u}_{\varepsilon}(t_{1})-\overline{u}_{\varepsilon}(t_{1}-h)}{h} \\
&\leq \lim_{h\rightarrow+0}\frac{u(t_{1})-u(t_{1}-h)}{h} = \frac{du}{dt}(t_1).
\end{align*}
This results in a contradiction, indicating that no such \( t_1 \) exists. 
By a similar argument for \( \underline{u}_{\varepsilon}(t) \), it can be shown that \( \underline{u}_{\varepsilon}(t)<u(t) \), and therefore it must be the case that \( \underline{u}_{\varepsilon}(t)<u(t)<\overline{u}_{\varepsilon}(t) \) for all \( t\in[0,T] \).
By taking the limit as \( \varepsilon \) approaches 0, it is shown that \( \underline{u}(t)\leq u(t)\leq \overline{u}(t) \) for all \( t\in[0,T] \).

\end{proof}

\begin{rem}
The piecewise $C^1$ condition in Theorem \ref{cor:ode-localsol} is particularly suitable for two reasons. First, from a theoretical perspective, this relaxed regularity requirement enables the construction of global-in-time solutions by extending time intervals, as will be demonstrated in the subsequent theorem (Theorem \ref{theo:global}). When extending sub- and super-solutions beyond a given time horizon, the connection points may exhibit discontinuous derivatives, necessitating the piecewise differentiability framework. Second, from a practical standpoint, this condition accommodates neural networks with non-smooth activation functions, such as ReLU, which are differentiable almost everywhere but exhibit points of non-differentiability. This flexibility broadens the applicability of our verification framework to a wider range of neural network architectures.
\end{rem}

\subsection{Global-in-time Solutions (Infinite Time Horizon $[0,\infty)$)}

Second, we extend the solution $u$ from a local interval to the unbounded global domain $[0, \infty)$, while ensuring it remains strictly enclosed within the sub- and super-solutions.
This step establishes the existence of a global-in-time solution.

\begin{theo}
    \label{theo:global}
    In addition to the assumption in Theorem \ref{cor:ode-localsol}, assume, for all \( t \in (T,\infty) \), that
    \begin{align*}
    &0\leq f(T,\ul{u}(T)) \leq f(t,\ul{u}(t)),\\
    &0\geq f(T,\ol{u}(T)) \geq f(t,\ol{u}(t)).
    \end{align*}
    Then there exists a solution \( u \in C([0,\infty)) \) to
    \begin{align}
        \label{eq:ivpode-global}
        \left\{\begin{array}{l l}
            \displaystyle\frac{du}{dt}(t)=f(t,u(t)), &t \in (0,\infty),\\
            u(0)=a\\
        \end{array}\right.
    \end{align}
    such that \( \ul{u}(t) \leq u(t) \leq \ol{u}(t) \) for all \( t \in [0,\infty) \).
\end{theo}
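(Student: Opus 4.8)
The plan is to deduce the global result from Theorem~\ref{cor:ode-localsol} applied on an increasing sequence of finite horizons, after extending the sub- and super-solutions to all of \([0,\infty)\) by constant continuation. Concretely, I would set \(\ul u(t):=\ul u(T)\) and \(\ol u(t):=\ol u(T)\) for \(t>T\). Each extended function is then piecewise \(C^1\) on every \([0,N]\), with at most one extra corner at \(t=T\), and its right-hand derivative vanishes on \((T,\infty)\). The two sign hypotheses are calibrated exactly so that these continuations remain valid barriers: since \(\ul u(t)=\ul u(T)\), the condition \(0\le f(T,\ul u(T))\le f(t,\ul u(t))\) reduces to \(0\le f(t,\ul u(t))\), i.e.\ \(\tfrac{d\ul u}{dt}(t)=0\le f(t,\ul u(t))\), so \eqref{eq:ivpode-sub-multi} persists; symmetrically \(0\ge f(t,\ol u(t))\) gives \(\tfrac{d\ol u}{dt}(t)=0\ge f(t,\ol u(t))\), so \eqref{eq:ivpode-super-multi} persists. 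The ordering is inherited for free, since \(\ul u(t)=\ul u(T)\le\ol u(T)=\ol u(t)\) for \(t>T\) by the conclusion \(\ul u(T)\le\ol u(T)\) of Theorem~\ref{cor:ode-localsol}.

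With the continuity of \(f\) on the enclosure region understood to extend to \([0,\infty)\) (for \(t>T\) this is the fixed box \([\ul u(T),\ol u(T)]\)), I would invoke Theorem~\ref{cor:ode-localsol} on \([0,N]\) for each integer \(N>T\). This yields, for each \(N\), a solution \(u_N\in C([0,N])\) of \eqref{eq:ivpode} with \(\ul u(t)\le u_N(t)\le\ol u(t)\) on \([0,N]\). The crux is then to assemble the family \(\{u_N\}\) into a single global solution.

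Here I expect the main obstacle. Because \(f\) is merely continuous, Theorem~\ref{cor:ode-localsol} guarantees existence but not uniqueness, so the \(u_N\) need not be restrictions of one another and cannot simply be concatenated. I would therefore proceed by compactness. On any fixed \([0,M]\), the functions \(\{u_N\}_{N\ge M}\) are uniformly bounded between \(\ul u\) and \(\ol u\); moreover, from \(u_N'(t)=f(t,u_N(t))\) and the boundedness of \(f\) on the compact enclosure region over \([0,M]\), they are uniformly Lipschitz and hence equicontinuous. Arzel\`a--Ascoli then furnishes a subsequence converging uniformly on \([0,M]\), and a diagonal extraction across \(M=1,2,\dots\) produces one subsequence converging uniformly on every compact set to a limit \(u\in C([0,\infty))\).

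It remains to confirm that this limit is a genuine enclosed solution. Passing to the limit in the integral form \(u_N(t)=a+\int_0^t f(s,u_N(s))\,ds\)---legitimate since \(u_N\to u\) uniformly on compacts and \(f\) is uniformly continuous there---gives \(u(t)=a+\int_0^t f(s,u(s))\,ds\) for all \(t\ge 0\), so \(u\) solves \eqref{eq:ivpode-global}; and letting \(N\to\infty\) in \(\ul u\le u_N\le\ol u\) yields \(\ul u(t)\le u(t)\le\ol u(t)\) on \([0,\infty)\). The delicate points to verify carefully are the \(N\)-uniform equicontinuity (resting on \(f\) being bounded on the now-compact enclosure box for \(t>T\)) and the correct use of one-sided derivatives at the corner \(t=T\) when applying the piecewise-\(C^1\) hypothesis of Theorem~\ref{cor:ode-localsol}.
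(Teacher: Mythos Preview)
Your proposal is correct and follows the same route as the paper: extend \(\ul u,\ol u\) by their constant values beyond \(T\), verify that the sign hypotheses make these constant continuations valid piecewise-\(C^1\) barriers, and invoke Theorem~\ref{cor:ode-localsol} on each finite horizon. The paper ends its argument with ``since \(\tau>T\) was chosen arbitrarily, the desired assertion follows,'' whereas you explicitly address the non-uniqueness issue via Arzel\`a--Ascoli and a diagonal extraction to produce a single global \(u\); your version is the more complete one on that point (a minor slip: the ordering \(\ul u(T)\le\ol u(T)\) is a \emph{hypothesis} of Theorem~\ref{cor:ode-localsol}, not its conclusion).
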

\begin{proof}
    Theorem \ref{cor:ode-localsol} guarantees the existence of a solution \( u \in C([0,T]) \) to \eqref{eq:ivpode} that satisfies \( \ul{u}(t) \leq u(t) \leq \ol{u}(t) \) for all \( t \in [0,T] \).
    It remains to demonstrate that this solution can be extended to \( u \in C([0,\infty)) \).
    
    Let \( \tau>T \) be arbitrarily fixed.
    We extend \( \ul{u},\ol{u} \) to \( [T,\tau) \)
    by defining
    \( \ul{u}(t)=\ul{u}(T) \) and \( \ol{u}(t)=\ol{u}(T) \) for all \( t \in [T,\tau) \).
    Owing to \eqref{eq:ivpode-global}, the extended functions \( \ul{u} \) and \( \ol{u} \) satisfy
    \begin{align*}
        &\frac{d\ul{u}}{dt}(t) = 0 \leq f(T,\ul{u}(T)) \leq f(t,\ul{u}(t)),~~t \in (T,\tau),\\
        &\frac{d\ol{u}}{dt}(t) = 0 \geq f(T,\ol{u}(T)) \geq f(t,\ol{u}(t)),~~t \in (T,\tau),\\
        &\ul{u}(t) \leq \ol{u}(t),~~t \in [T,\tau].
    \end{align*}
    Therefore, employing Theorem \ref{cor:ode-localsol} for \( 0=t_0<t_1<\cdots<t_n(=T)<t_{n+1}(=\tau) \), we ascertain that there exists a solution \( u \in C([0,\tau]) \) to \eqref{eq:ivpode} such that \( \ul{u}(t) \leq u(t) \leq \ol{u}(t) \) for all \( t \in [0,\tau] \).
    Since the extended functions \( \ul{u} \) and \( \ol{u} \) are constant on \( [T,\tau) \), their derivatives may be discontinuous at the connection point \( t=T \), but this is permitted by the piecewise \( C^1 \) condition.
    Since \( \tau>T \) was chosen arbitrarily, the desired assertion follows.
\end{proof}

\section{Numerical Examples}\label{sec:experiment}

To demonstrate the capabilities and versatility of the proposed framework, we applied it to three ODEs. The first is the logistic equation, a classical problem where the analytic solution is known. This case serves to validate the quantitative accuracy of the proposed error bounds against the ground truth. 
The second is a generalized logistic equation, a variant featuring time-varying coefficients, for which an analytic solution exists, but it cannot be written explicitly using elementary functions.
This demonstrates the applicability of the proposed framework to non-trivial problems where exact verification is impossible via traditional analytical means. The third is the Riccati equation, a nonlinear equation whose solution diverges to infinity within a finite time. This example illustrates the ability of the proposed framework to rigorously enclose the blow-up time (singularity) of the solution.

\subsection{Common Experimental Setup}

All numerical experiments were conducted on a workstation%
\footnote{Regarding computational time, the training of the approximate solution (Step 1) typically required between 30 seconds and 5 minutes. The subsequent construction of the sub- and super-solutions (Step 2) took approximately 3 to 15 minutes, depending on problem complexity and the number of epochs. In contrast, the verification using interval arithmetic (Step 4) is computationally inexpensive, typically completing within a few seconds. Note that these timings were measured in a CPU environment. GPU acceleration is expected to significantly reduce the computational time in the training phase.}
with Windows~11 Pro (version 24H2), an AMD Ryzen~9 9950X 16-Core Processor (4.30\:GHz), and 128\:GB RAM (3600\:MT/s), using MATLAB 2025b and INTLAB version 14. For reproducibility, the source code, trained model weights, and verification scripts are made publicly available at
\url{https://github.com/Kazuaki-Tanaka/learn-and-verify-ode}.

We employed SIREN 
consisting of 4 hidden layers with 30 neurons per layer. For Step 2, we additionally incorporated a scaled sigmoid function at the output layer.
They are initialized as for SIREN~\cite{sitzmann2020implicit}, and the training was performed using the Adam optimizer.
The mini-batch size was set to 128 for Step 1 and increased to 1280 for Step 2, utilizing region-based random sampling with 100 equal regions.
The parameters for $L_{\text{Sub{\&}Sup}}$ defined in \eqref{eq:costfuncsubsuper} were set to $c_1 = 10^{-2}$ and $c_2 = 10^{-3}$.

\subsection{Classical Logistic Equation}

We first consider the classical logistic equation, a fundamental model for population growth that incorporates the effect of limited resources through a carrying capacity. The dynamics are governed by the following initial value problem:
\begin{align}
	\label{eq:logistic}
	\left\{\begin{array}{l l}
		\displaystyle\frac{du}{dt}=f(t,u):=ru\left(1-\frac{u}{k}\right), &t \in (0,T),\\
		u(0)=a\\
	\end{array}\right.
\end{align}
where $u = u(t)$ represents the population at time $t$, $r$ is the intrinsic growth rate, $k$ is the carrying capacity, and $a$ is the initial population size.
Its analytical solution is given as
\begin{align}
u(t) = \frac{k}{1 + \left(\frac{k}{a} - 1\right) e^{-rt}}.
\end{align}
The existence of this exact solution enables us to rigorously quantify the accuracy of our learned approximate solution and the tightness of the derived error bounds. In our numerical experiments, we adopted the following parameter set: $r=1$, $k=2$, $a=0.5$, and $T=10$.

\begin{figure}[t]
    \centering
    \hspace*{-0.5cm}
    \begin{tabular}{cc}
        \includegraphics[width=0.24\textwidth]{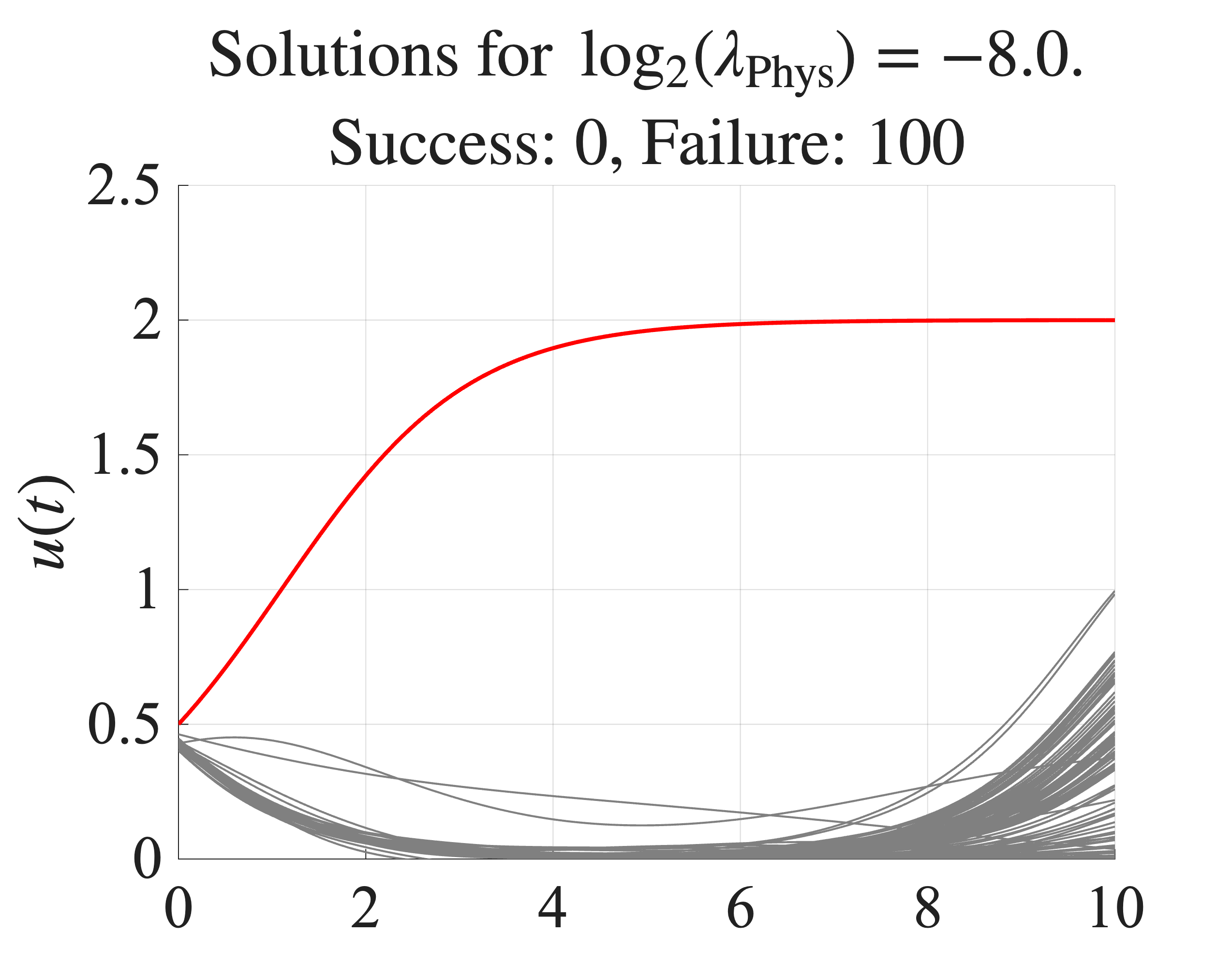} &
        \includegraphics[width=0.24\textwidth]{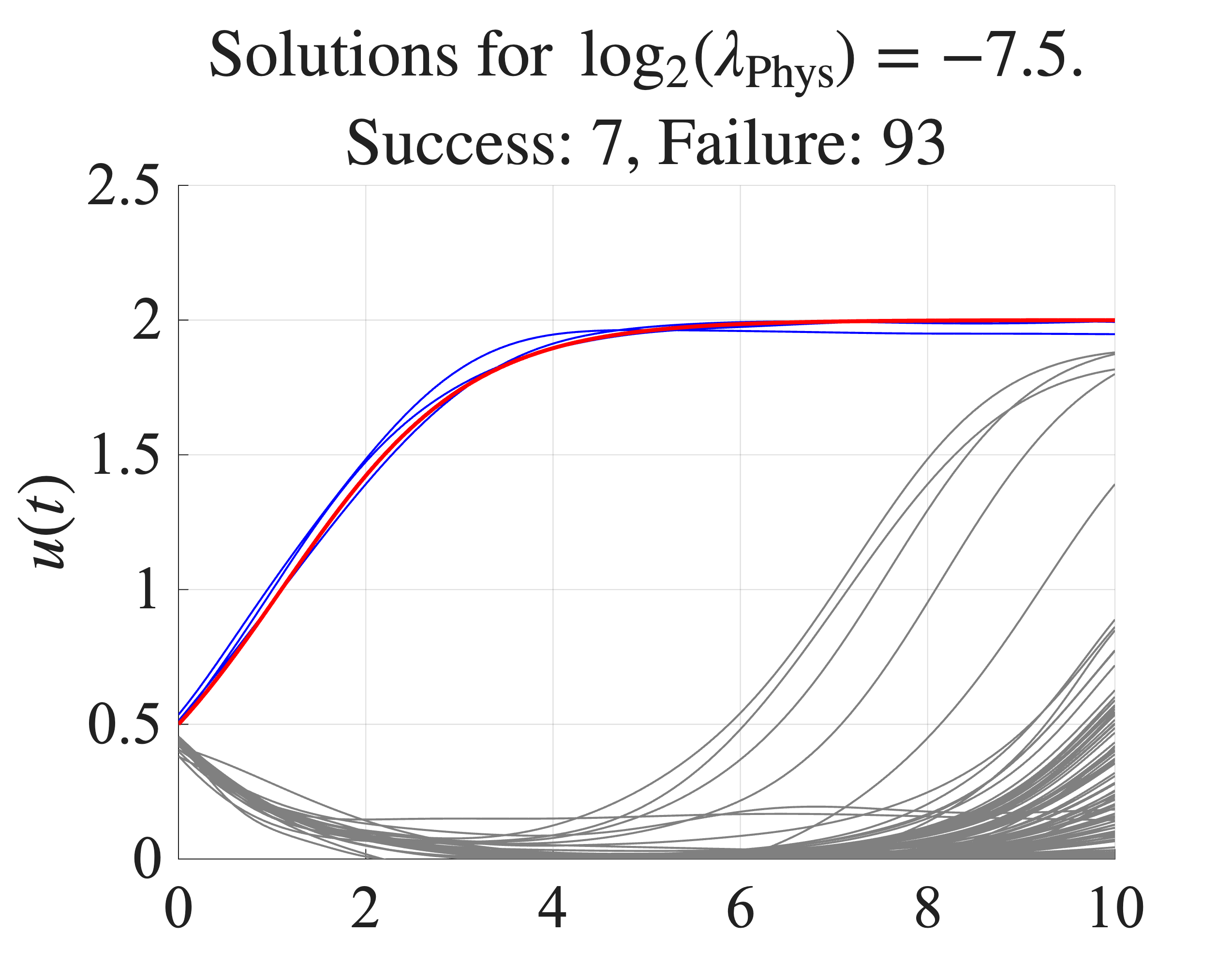} \\
        
        \includegraphics[width=0.24\textwidth]{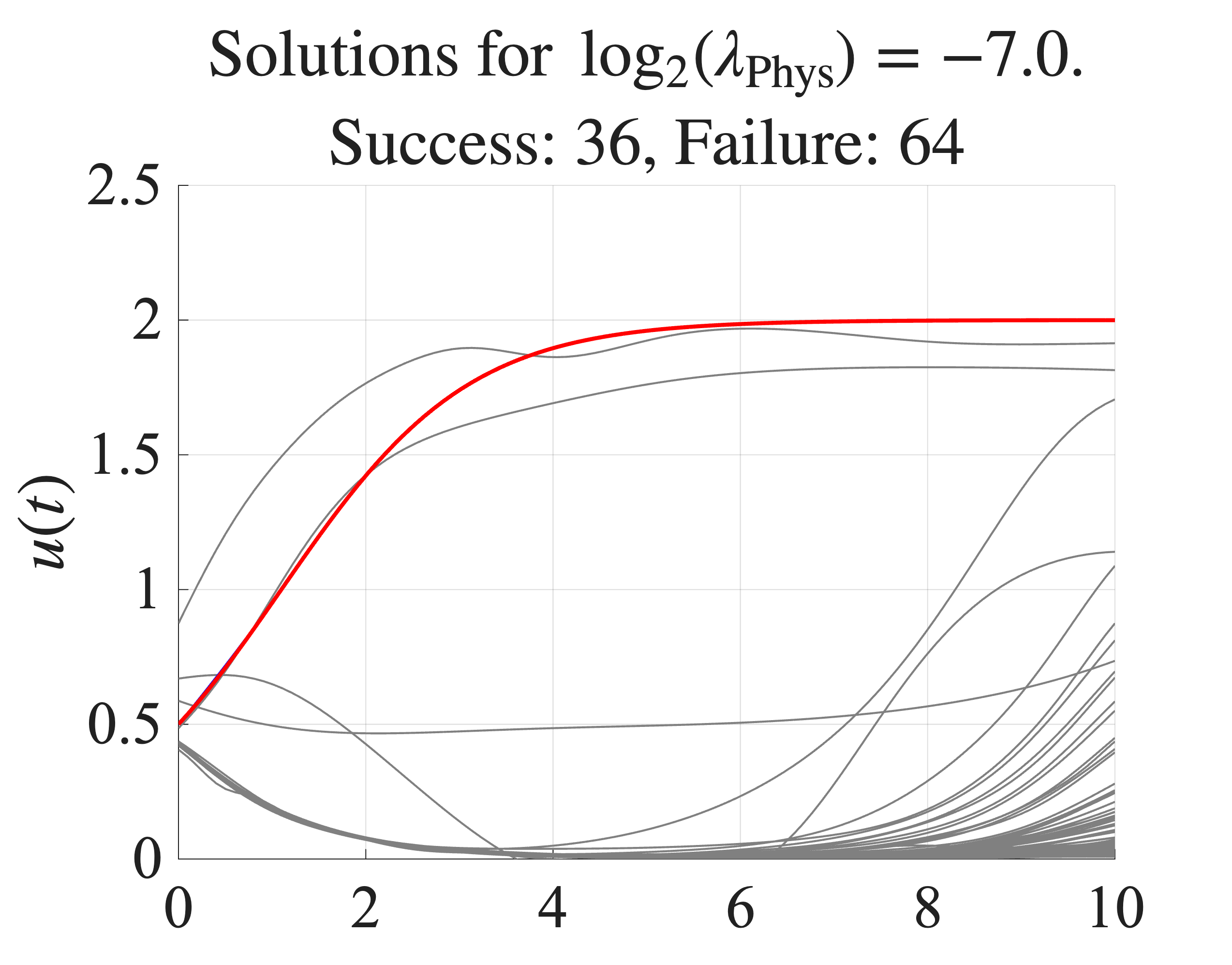} &
        \includegraphics[width=0.24\textwidth]{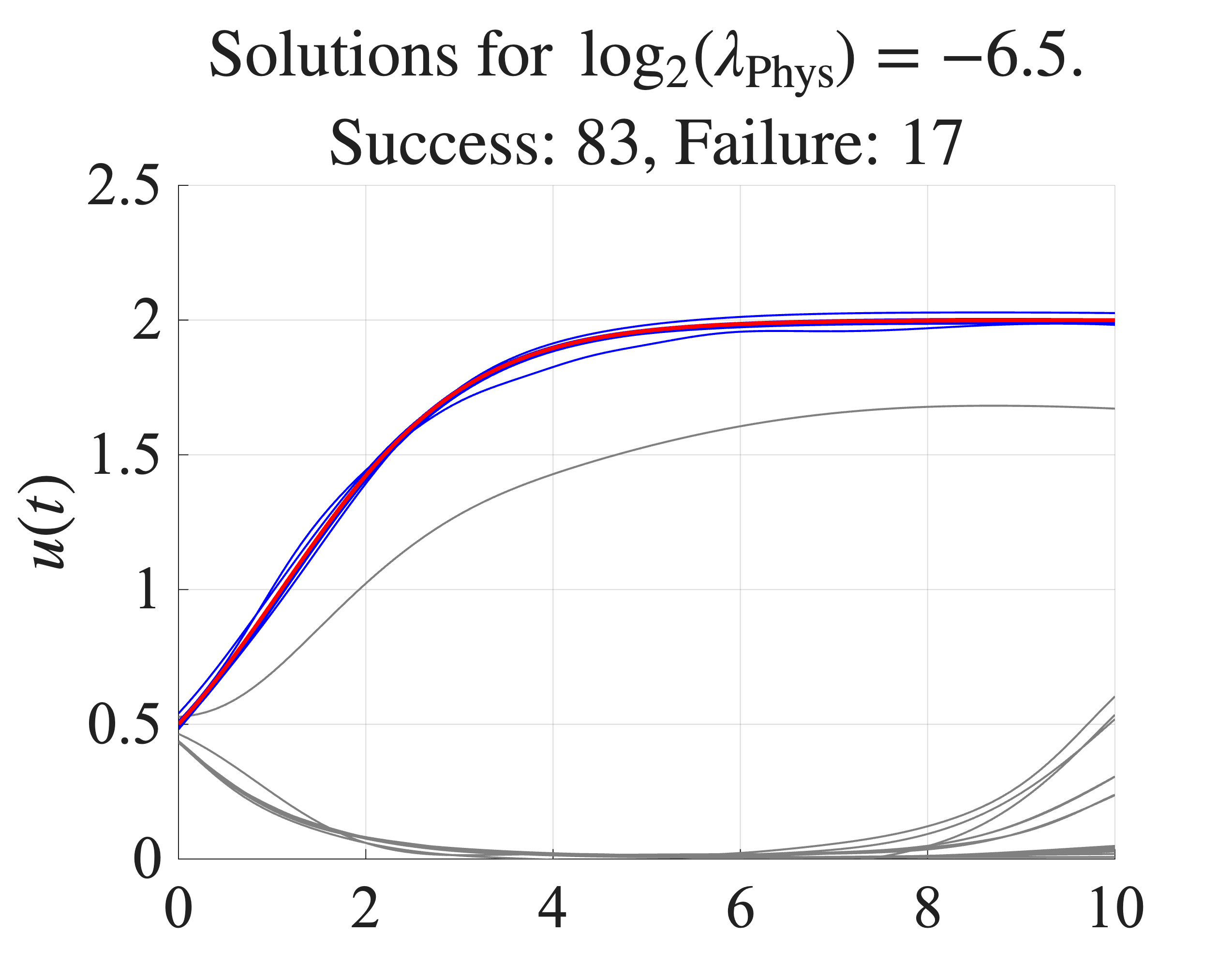} \\
        
        \includegraphics[width=0.24\textwidth]{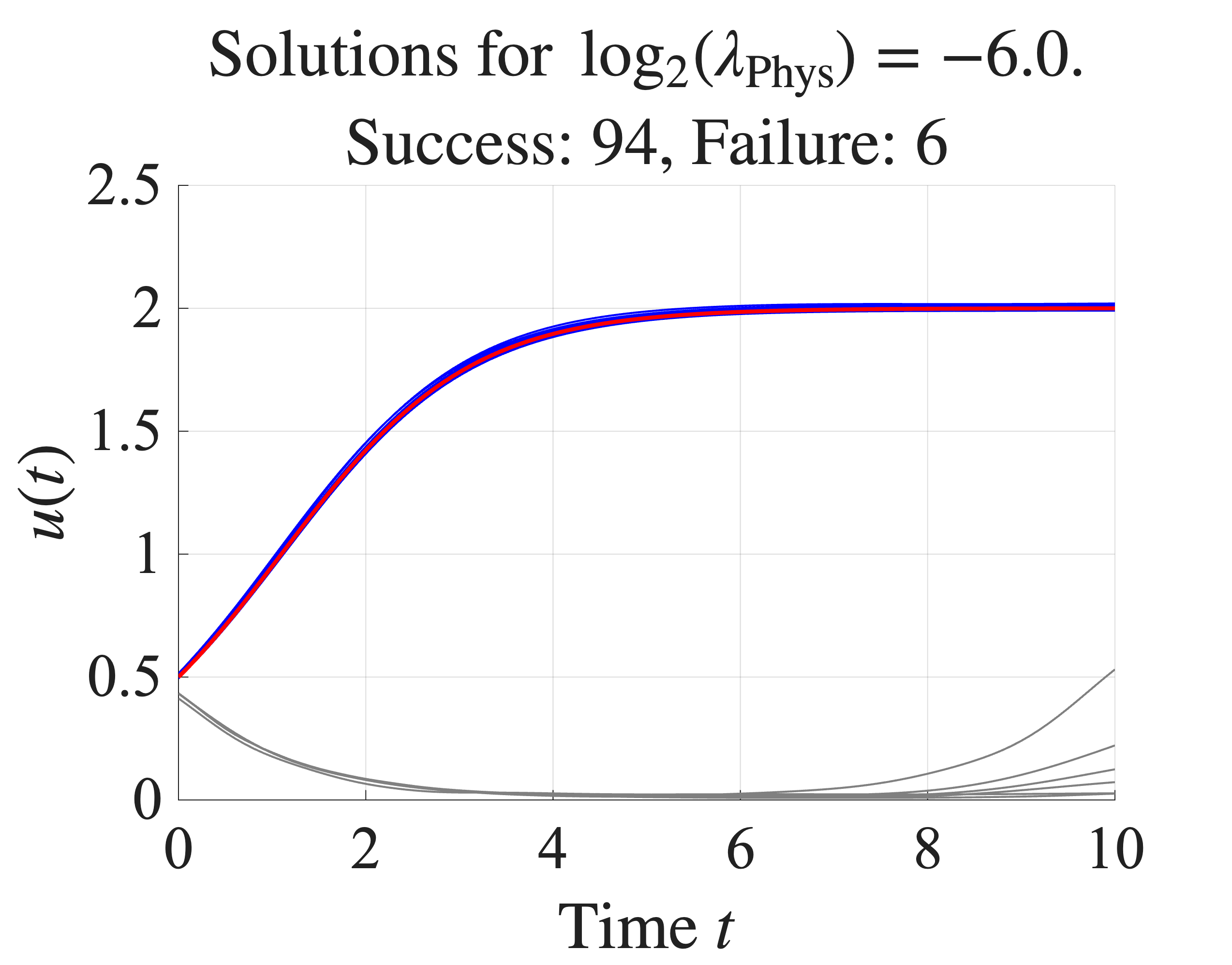} &
        \includegraphics[width=0.24\textwidth]{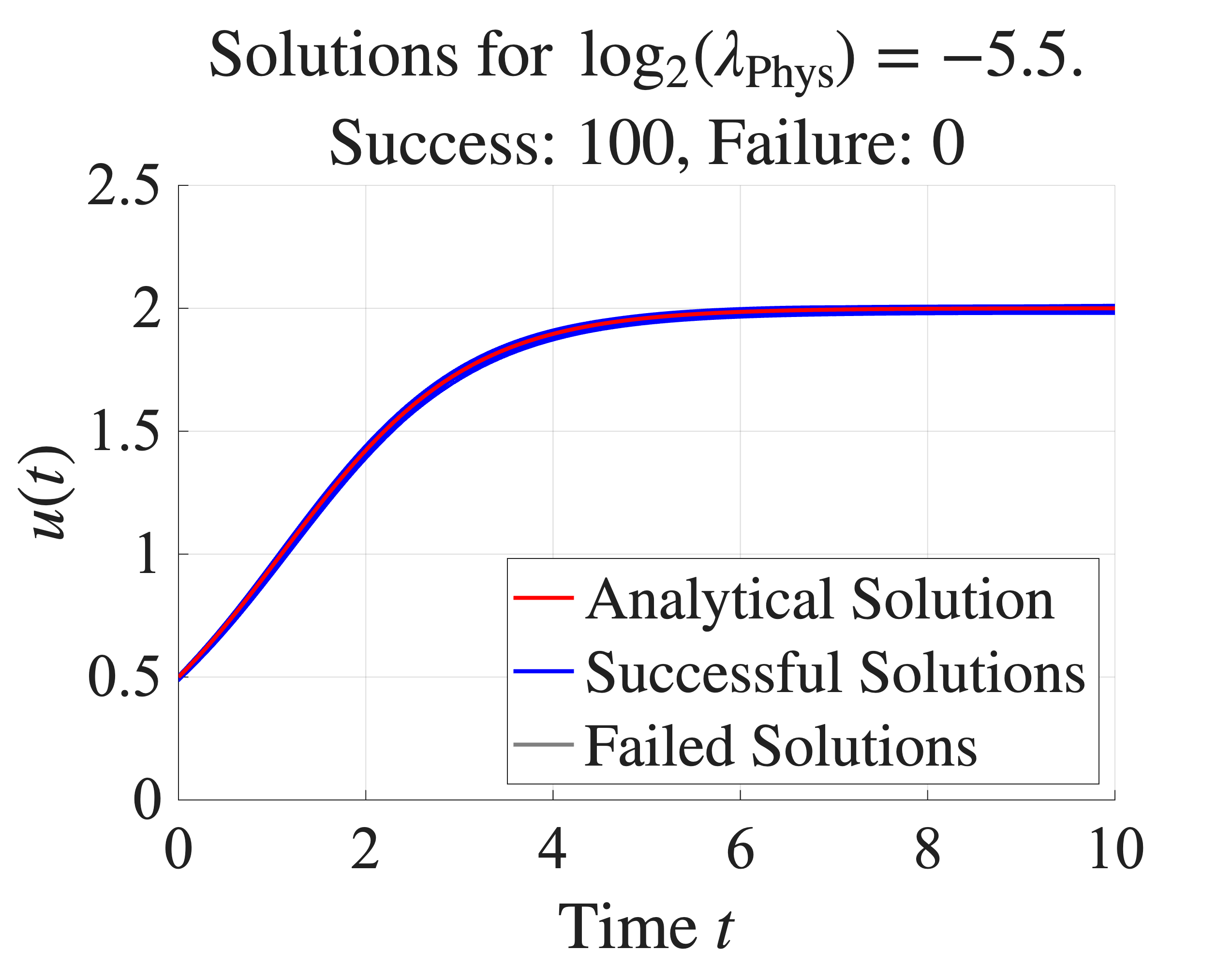} \\
    \end{tabular}
    
    \caption{Numerical solutions for the logistic equation approximated by PINNs under varying regularization strengths. Subplots correspond to values of $\log_2(\lambda_{\text{Phys}})$ ranging from $-8.0$ (top left) to $-5.5$ (bottom right), where $\lambda_{\text{Phys}}$ weights the regularization term $L_{\text{Phys}}$ defined in \eqref{eq:penalty}. Red curves represent the analytical solution (ground truth), blue curves show successful numerical solutions verified by our framework, and gray curves indicate failed solutions.}
    \label{fig:all_solutions}
\end{figure}

For this specific parameter set, the training process frequently failed. We observed that this instability can be effectively circumvented by incorporating the physical regularization term $L_{\text{Phys}}$ defined in \eqref{eq:penalty}. To motivate our approach, we first demonstrate a scenario where a standard PINN fails to capture the true solution. This failure mode critically emphasizes the necessity of rigorous verification. Here, we set $\lambda_{\text{IV}}=1$, employed a learning rate of $0.01$, and trained the model for 100 epochs (with 20 iterations per epoch).

Fig.~\ref{fig:all_solutions} illustrates the training results corresponding to different values of the regularization parameter $\lambda_{\text{Phys}}$, along with the counts of successful and failed verifications. As evident from the figure, under weak regularization regimes, many approximate solutions became trapped in local minima, leading to erroneous predictions. Conversely, strengthening the regularization term significantly improved the success rate. It is crucial to note that while the existence of an analytical solution in this specific example permits direct validation of the PINN's accuracy, blindly relying on PINN outputs in general scenarios, where no such reference exists, carries substantial risk. Our proposed framework addresses this critical gap by validating approximation results without reliance on ground-truth references, making it indispensable for trustworthy scientific machine learning.

The effect of the regularization parameter $\lambda_{\text{Phys}}$ was investigated using the analytic solution as ground truth, and the results are summarized in Figs.~\ref{fig:toy_errors} and \ref{fig:toy_success}. For each value of $\lambda_{\text{Phys}}$, we performed 100 independent trials using different random initializations. A trial was considered successful if the maximum relative error, $\max_{t \in [0,T]} |u_{\hat{\theta}}(t) - u^\star(t)| / u^\star(t)$, was less than or equal to $0.1$ (i.e., $10\%$) after training, where $u^\star$ denotes the analytical solution. Based on these results, we set $\log_2(\lambda_{\text{Phys}}) = -4$ for the subsequent testing.

\begin{figure}[t]
    \centering
    \includegraphics[width=0.7\linewidth]{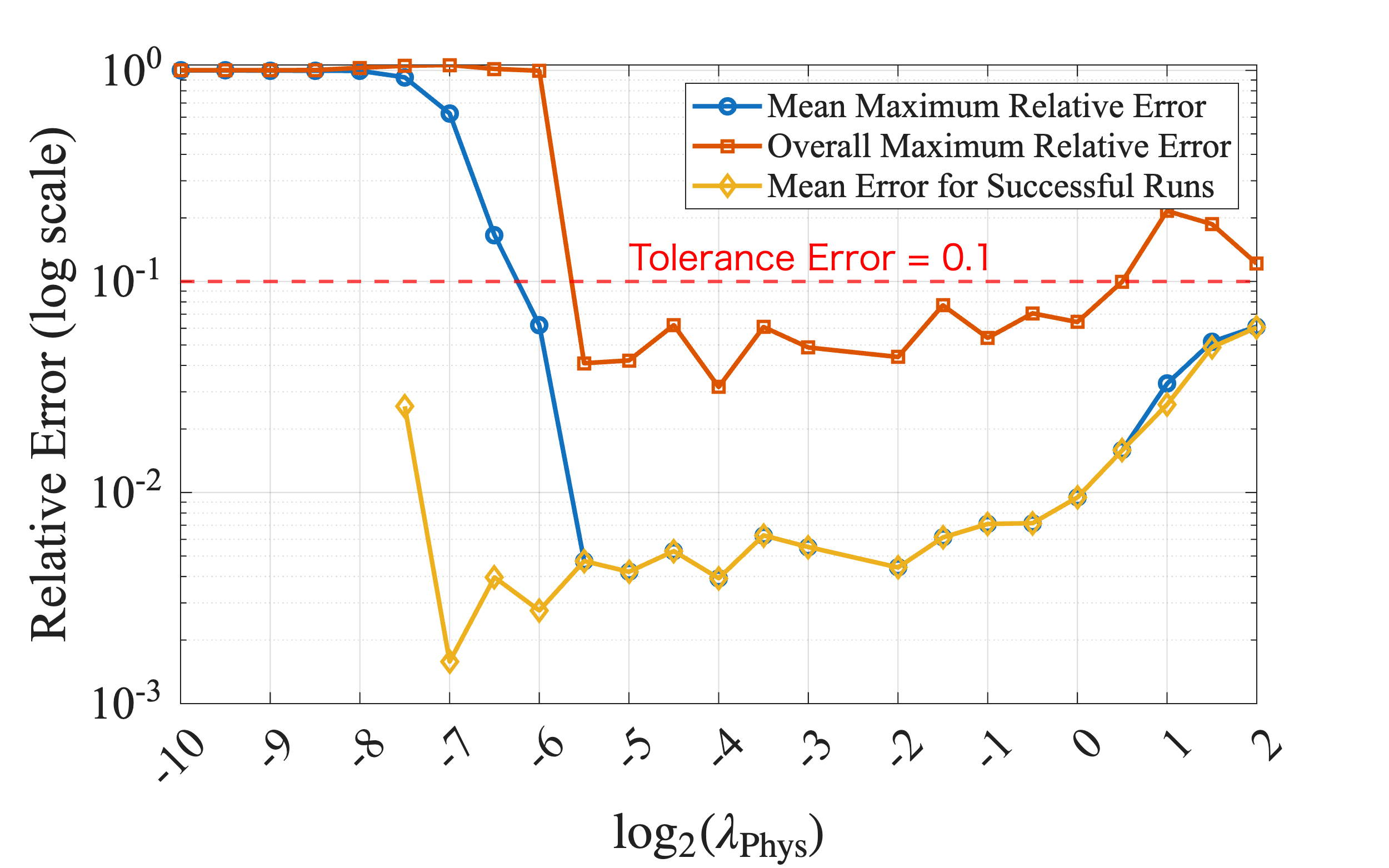}
    \caption{Relative approximation errors for the logistic equation as a function of the regularization parameter $\lambda_{\text{Phys}}$. The horizontal axis represents $\log_2(\lambda_{\text{Phys}})$, while the vertical axis displays the relative error on a logarithmic scale. ``Mean'' denotes the average across 100 trials, whereas ``Overall'' corresponds to the worst-case scenario. The red dashed line indicates the error tolerance threshold of 0.1.}
    \label{fig:toy_errors}
\end{figure}

\begin{figure}[t]
    \centering
    \includegraphics[width=0.7\linewidth]{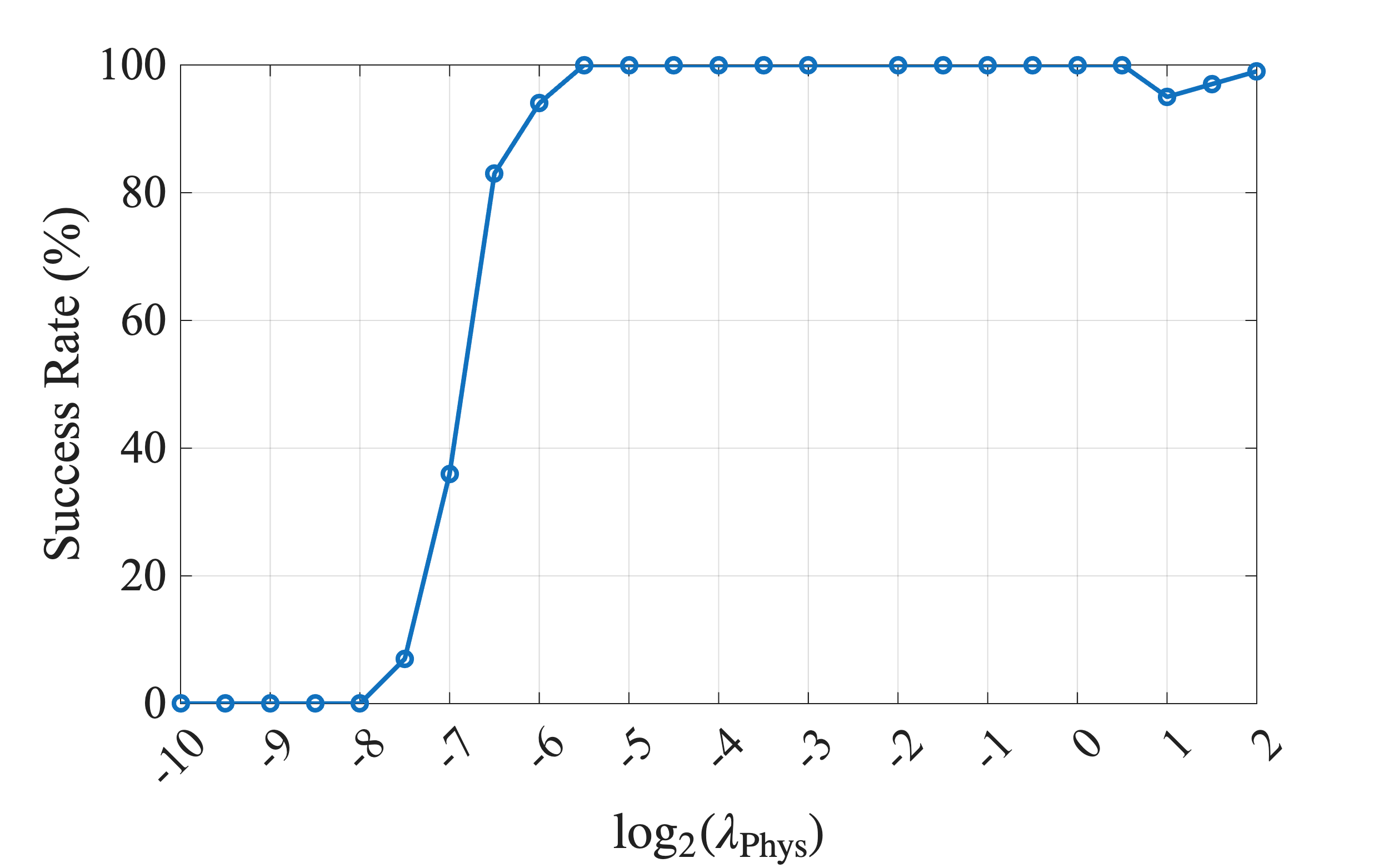}
    \caption{Success rate of approximate solutions for the logistic equation as a function of the regularization parameter $\lambda_{\text{Phys}}$. The success rate demonstrates a sharp, sigmoidal increase within the interval $\log_2(\lambda_{\text{Phys}}) \in [-8, -6]$, eventually plateauing at nearly $100\%$ for $\log_2(\lambda_{\text{Phys}}) \geq -5.5$.}
    \label{fig:toy_success}
\end{figure}

To investigate the characteristics of the proposed Learn and Verify framework, we evaluated the success rate by varying the error tolerance $\varepsilon$ associated with $v_{\ul{\theta}}$ and $w_{\ol{\theta}}$, as well as the number of layers in the neural networks.
We employed a learning rate of $10^{-4}$ and monitored verification success at checkpoints of 100, 200, and 300 epochs. The verification success rates for different combinations of $\varepsilon$ and network depth are summarized in Table~\ref{tab:toy_verification}.
For visual reference, examples of the obtained sub- and super-solutions, along with their corresponding residuals, are presented in Fig.~\ref{fig:toy_enclosure_comparison}.

\begin{table}[t]
\centering
\caption{
  Verification success rates for the sub- and super-solutions of the logistic equation across different error tolerances $\varepsilon$ and network architectures.
}
\label{tab:toy_verification}
\begin{tabular}{c c c c c}
\hline
{Tolerance} & 
\multirow{2}{*}{{\#Layers}} & 
\multicolumn{3}{c}{{Success rate (\%)}}\\
\cline{3-5}
{param} ${\varepsilon}$ & & 100th epoch & 200th epoch & 300th epoch\\
\hline
\multirow{4}{*}{$2^{-4}$}
 & 3 & 70 & 91 & 98 \\
 & 4 & 87 & 95 & 98 \\
 & 5 & 82 & 95 & 100 \\
 & 6 & 90 & 99 & 100 \\
\hline
\multirow{4}{*}{$2^{-5}$}
 & 3 & 45 & 92 & 99 \\
 & 4 & 74 & 98 & 100 \\
 & 5 & 84 & 100 & 100 \\
 & 6 & 79 & 98 & 100 \\
\hline
\multirow{4}{*}{$2^{-6}$}
 & 3 & 14 & 86 & 98 \\
 & 4 & 59 & 96 & 100 \\
 & 5 & 65 & 94 & 98 \\
 & 6 & 55 & 97 & 99 \\
\hline
\multirow{4}{*}{$2^{-7}$}
 & 3 & 0 & 1 & 13 \\
 & 4 & 0 & 3 & 25 \\
 & 5 & 0 & 4 & 18 \\
 & 6 & 0 & 4 & 13 \\
\hline
\rule[-0.6em]{0pt}{1.8em}$2^{-8}$ & 3--6 & 0 & 0 & 0 \\
\hline
\end{tabular}
\end{table}

\begin{figure}[t]
    \centering
    \begin{tabular}{@{}cc@{}}
        \includegraphics[width=0.48\linewidth]{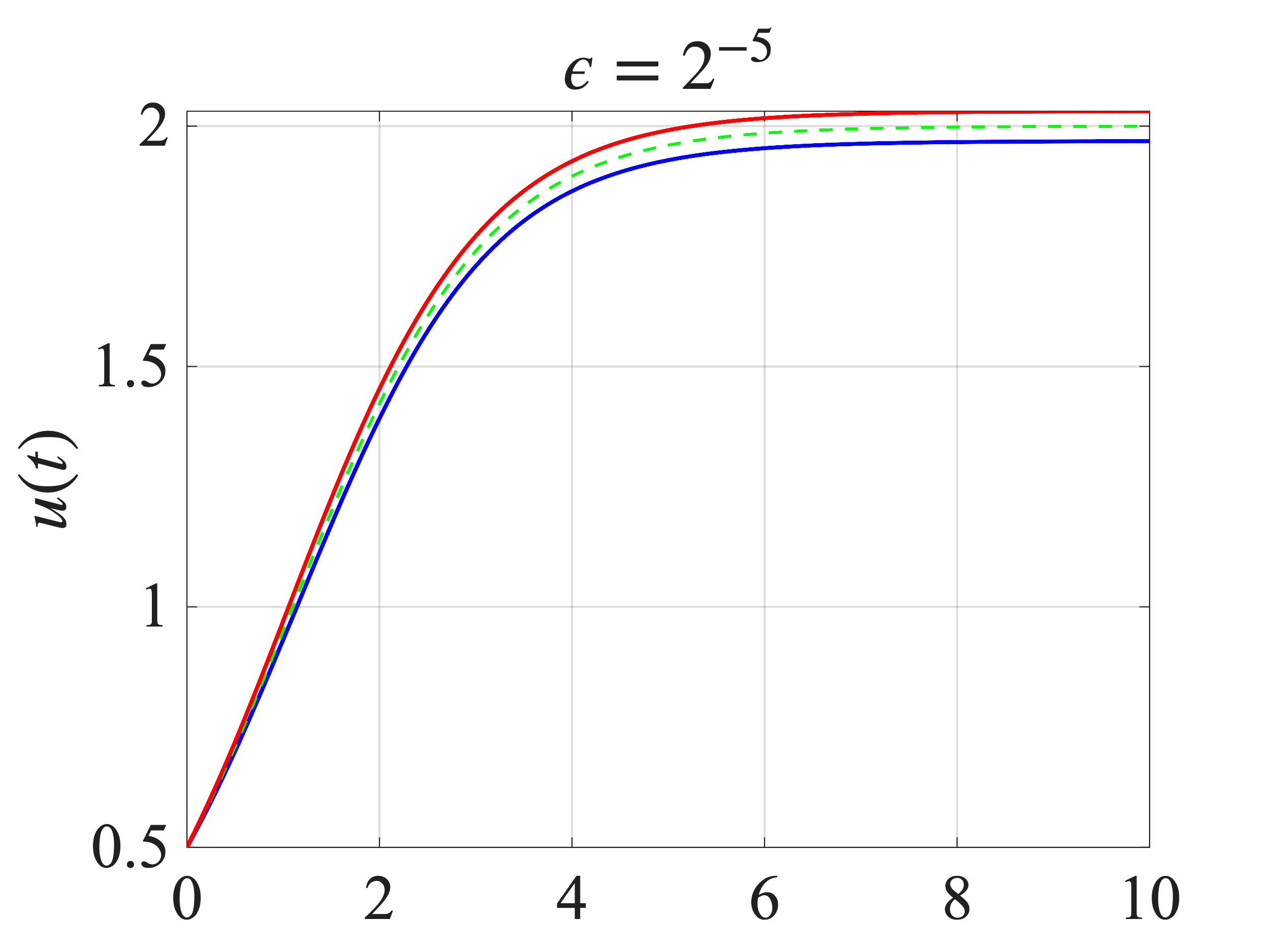} &
        \includegraphics[width=0.48\linewidth]{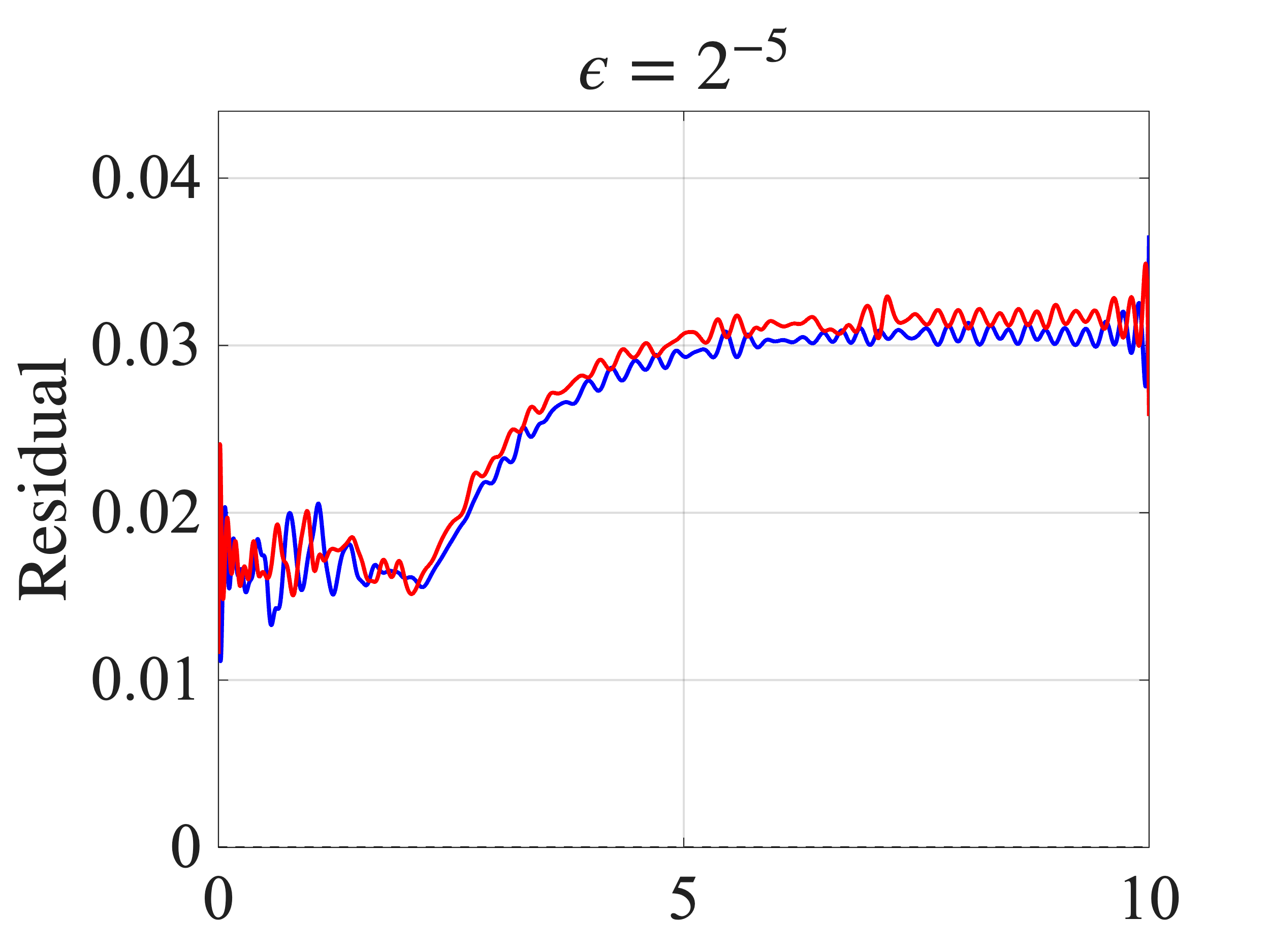} \\[0.5em]
        \includegraphics[width=0.48\linewidth]{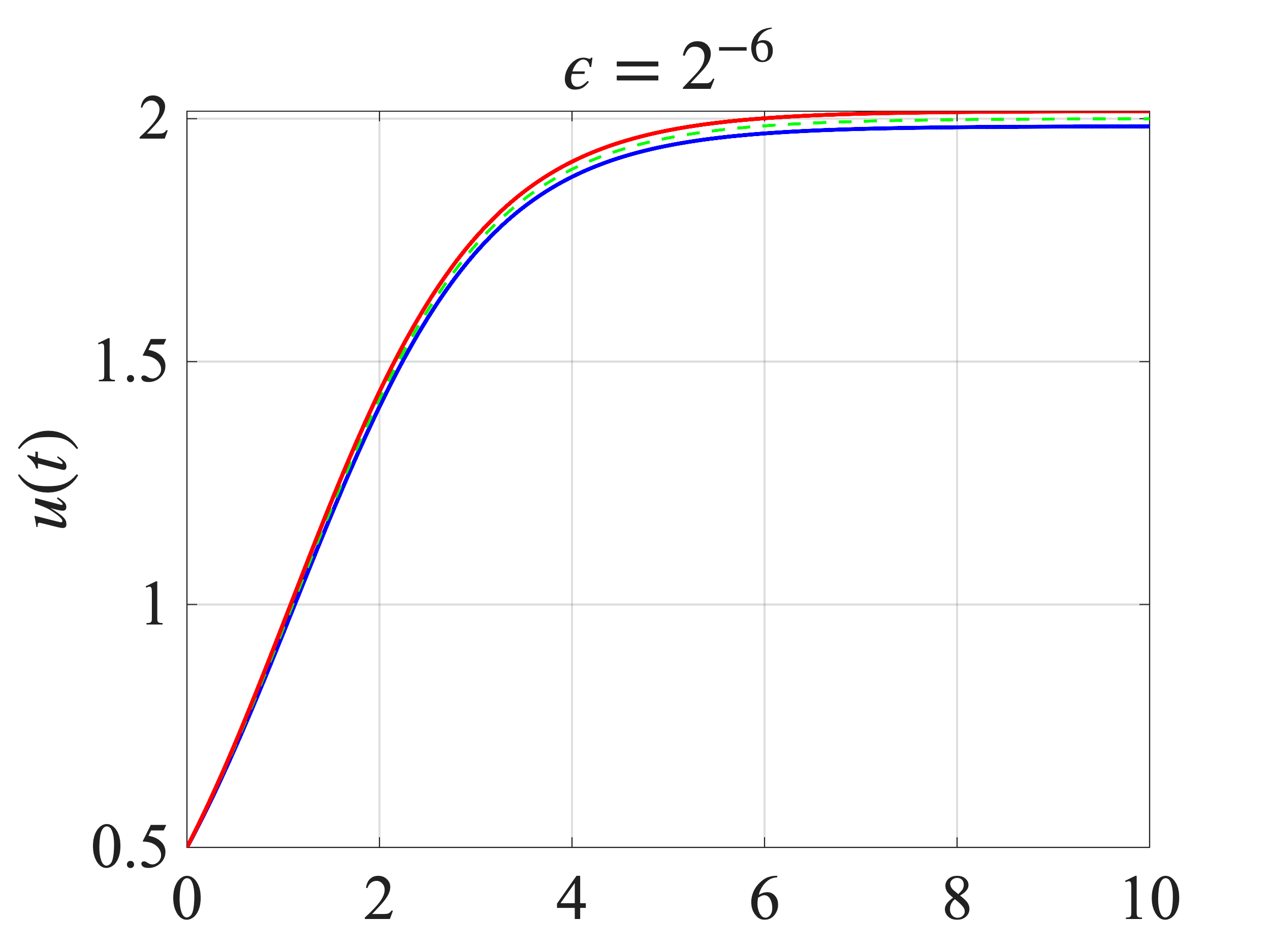} &
        \includegraphics[width=0.48\linewidth]{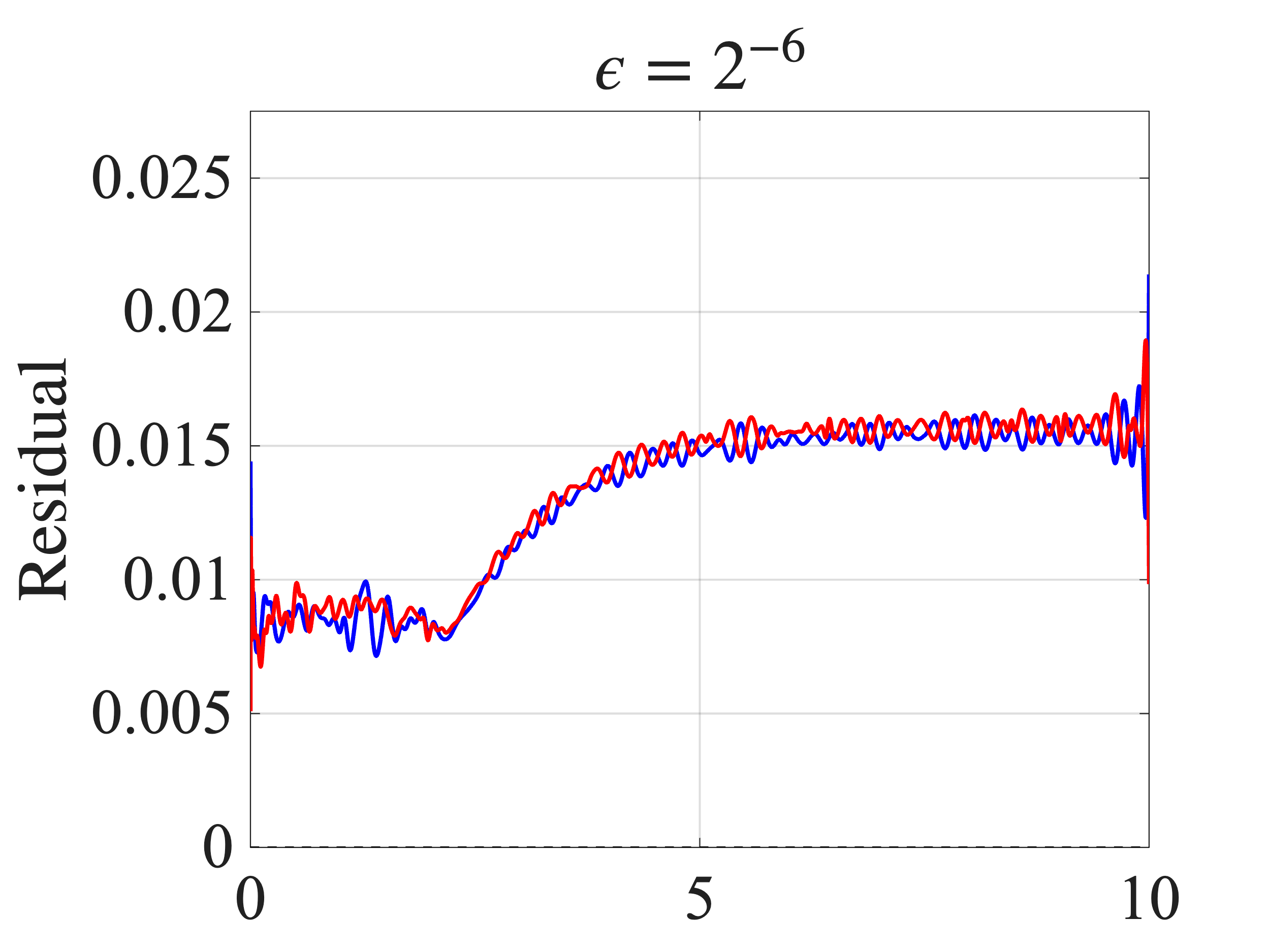} \\[0.5em]
        \includegraphics[width=0.48\linewidth]{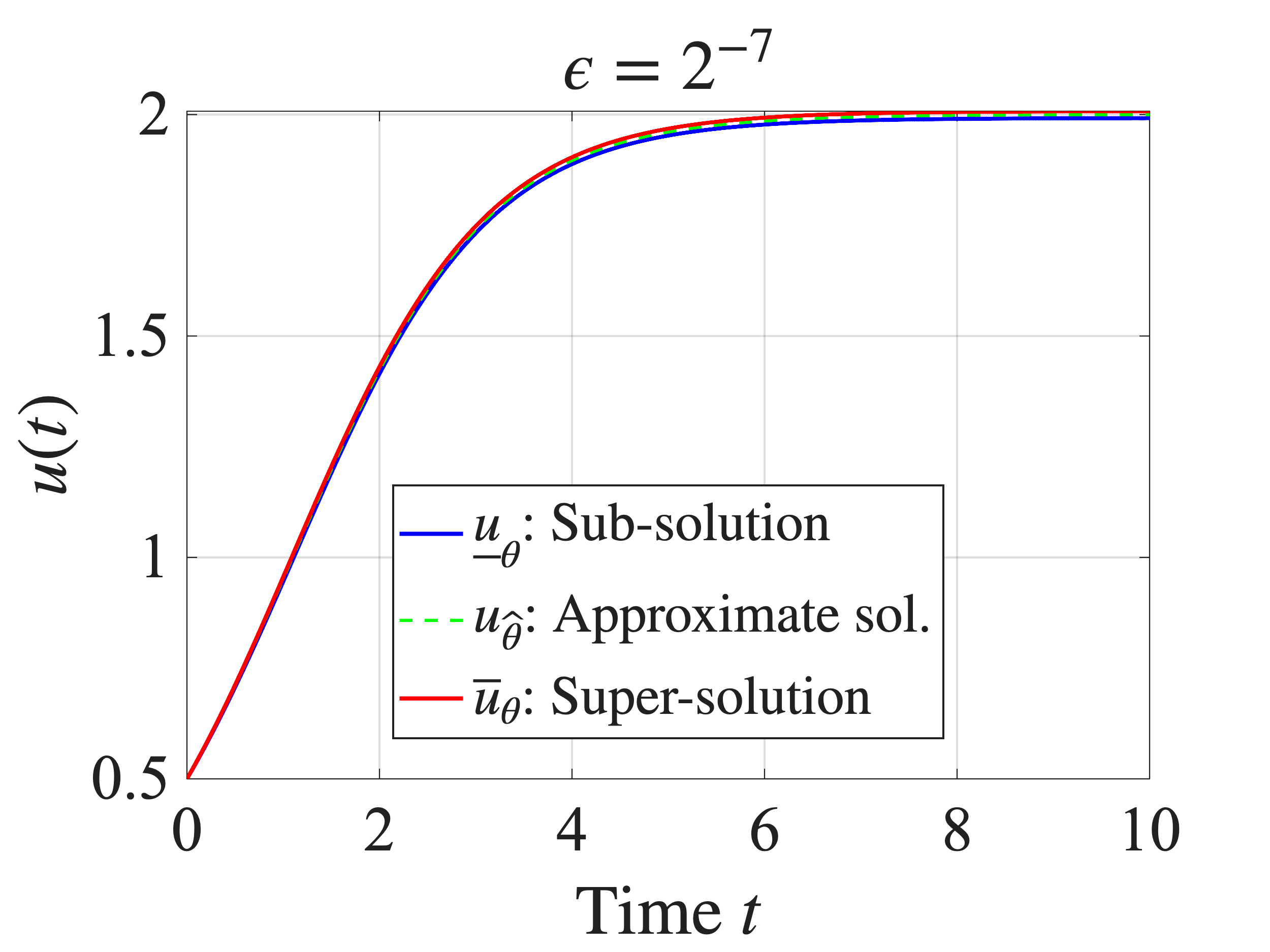} &
        \includegraphics[width=0.48\linewidth]{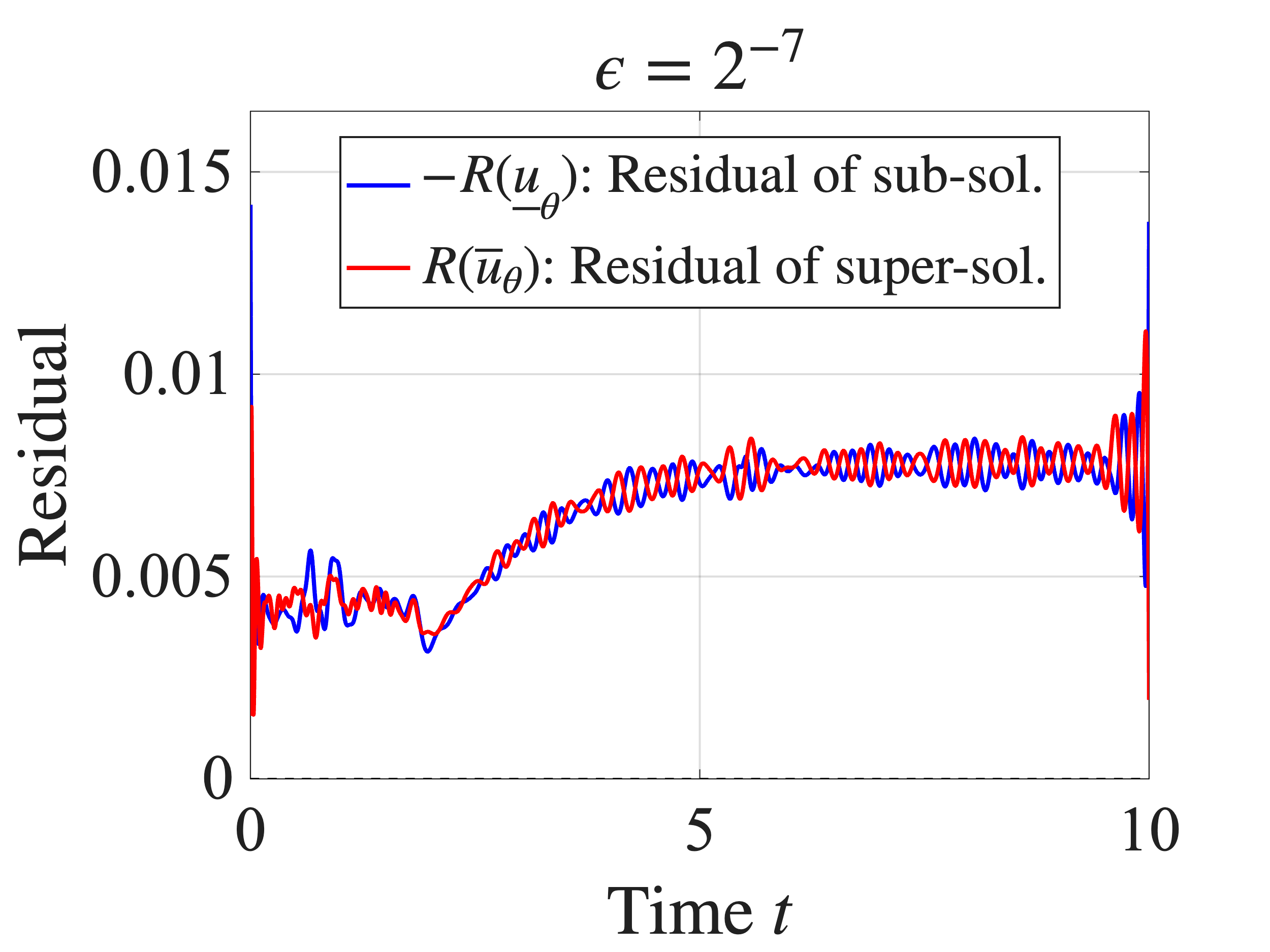}
    \end{tabular}
    \caption{Solution enclosures (left) and ODE residuals (right) for three error tolerances: $\varepsilon = 2^{-5}, 2^{-6}, 2^{-7}$ (from top to bottom). The sign of the residuals is chosen such that non-negativity correspond to a successful verification.}
    \label{fig:toy_enclosure_comparison}
\end{figure}

These results yield some insights regarding the verification of sub- and super-solutions. First, a tighter error tolerance increases the difficulty of training, as evidenced by the increase in the number of required epochs. This can be intuitively understood from Fig.~\ref{fig:toy_enclosure_comparison}, as a smaller $\varepsilon$ imposes stricter constraints, requiring the residual to approach zero without crossing it. Second, the inherent error of the approximate solution $u_{\hat{\theta}}$ can be inferred from the boundary between verification success and failure. In this instance, the sharp transition suggests that the approximation error lies between $2^{-7}$ and $2^{-8}$. Although this range might be an overestimate due to the conservative nature of interval arithmetic, the mean error shown in Fig.~\ref{fig:toy_errors} is consistent with the estimation provided by the proposed framework.

\subsection{Generalized Logistic Equation}

Next, we consider the generalized logistic equation, which extends the classical model by incorporating time-dependent coefficients. The dynamics are governed by 
\begin{align}
	\label{eq:logistic-gene}
	\left\{\begin{array}{l l}
		\displaystyle\frac{du}{dt}=f(t,u):=r(t)u\left(1-\frac{u}{k(t)}\right), &t \in (0,T),\\
		u(0)=a,\\
	\end{array}\right.
\end{align}
where $u=u(t)$, and the coefficients $r(t)$ and $k(t)$ represent the time-varying intrinsic growth rate and carrying capacity, respectively. For this experiment, we define them as
\begin{align*}
r(t)=r_0 (1+\sin(\alpha t)), \;\quad\;
k(t)=k_0(\log(1+t)+1).
\end{align*}
In our numerical experiments, we utilized the following parameter set: $T=10$, $r_0=k_0=2$, and $\alpha=10$.

Unlike the classical case, the explicit time dependence of $r$ and $k$ generally precludes the derivation of a closed-form analytical solution. While standard numerical integrators (e.g., Runge-Kutta methods) can provide efficient approximate trajectories, they typically lack the ability to certify the accuracy of the result. In contrast, our proposed framework constructs a rigorous functional enclosure, providing mathematically guaranteed error bounds even in the absence of an exact reference solution. This effectively guarantees the location of the true solution, which would otherwise remain uncertain.

To illustrate the increased difficulty of this problem compared to the classical logistic equation discussed in the previous subsection, we generated plots analogous to Figs.~\ref{fig:toy_errors} and \ref{fig:toy_success}, employing a high-precision numerical solution as the reference. The results are summarized in Figs.~\ref{fig:general_errors} and \ref{fig:general_success}. Although the high error and low success rates suggest that the problem is challenging, these metrics are inherently unreliable due to the absence of a closed-form solution.
Based on these results, we set $\log_2(\lambda_{\text{Phys}}) = 8$ for the subsequent testing.

\begin{figure}[t]
    \centering
    \includegraphics[width=0.7\linewidth]{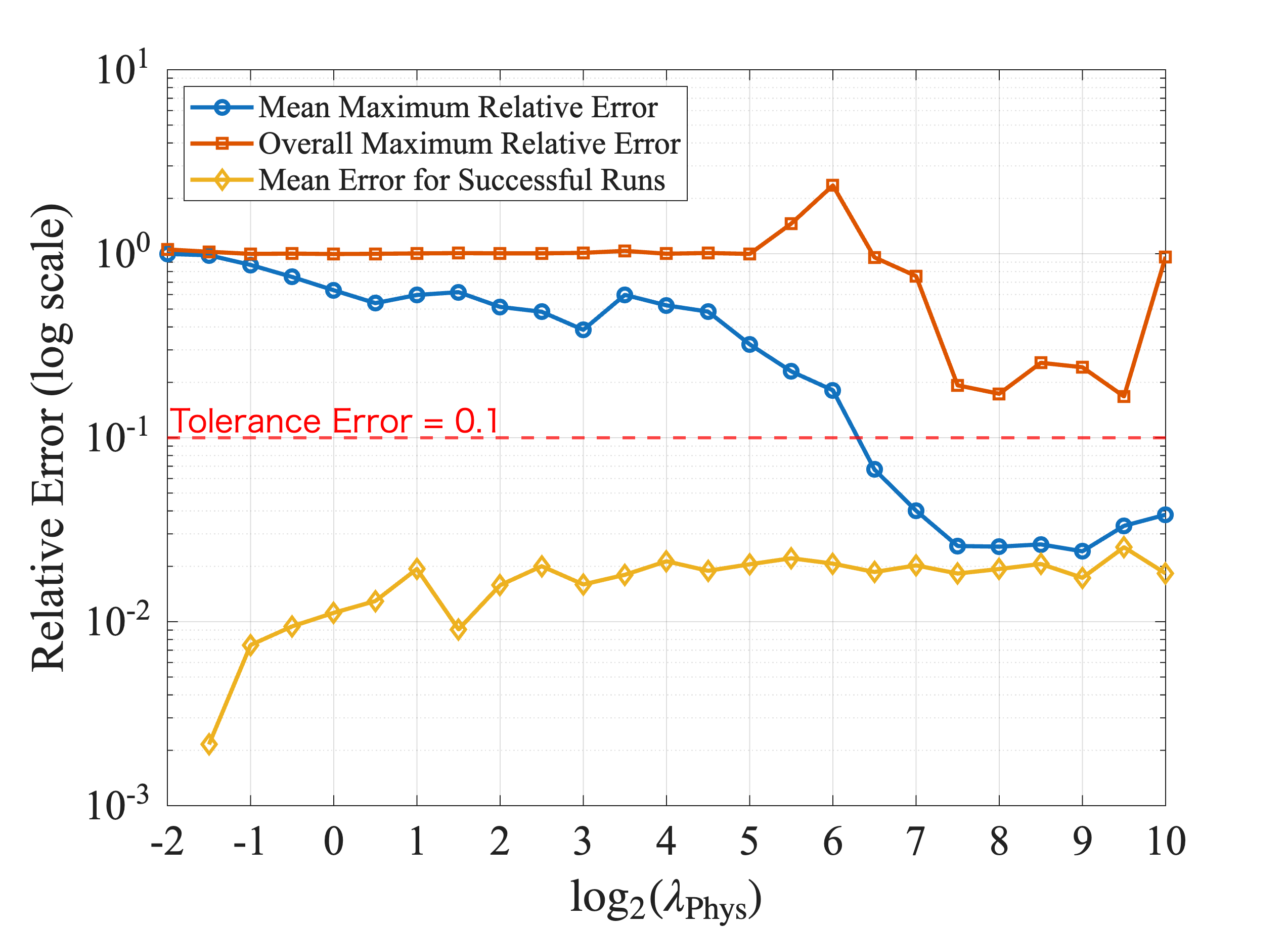}
    \caption{Relative approximation errors for the generalized logistic equation as a function of the regularization parameter $\lambda_{\text{Phys}}$. In contrast to Fig.~\ref{fig:toy_errors}), this figure was generated using a reference numerical solution obtained via MATLAB’s ODE45 solver with stringent tolerances ($\text{RelTol} = \text{AbsTol} = 10^{-12}$, $\text{MaxStep} = 0.01$) over a fine grid of 10000 time points.}
    \label{fig:general_errors}
\end{figure}

\begin{figure}[t]
    \centering
    \includegraphics[width=0.7\linewidth]{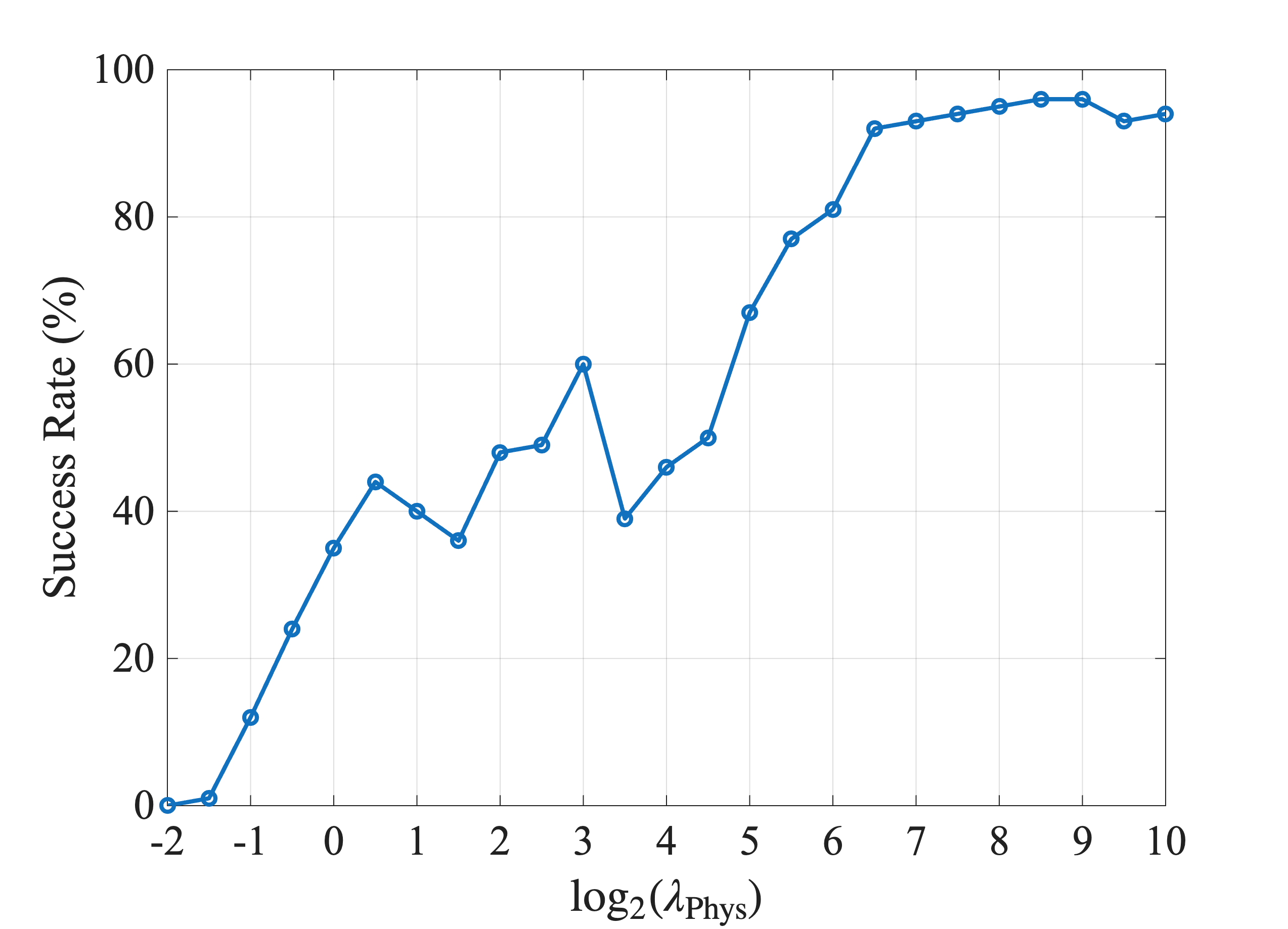}
    \caption{Success rate of approximate solutions for the generalized logistic equation as a function of the regularization parameter $\lambda_{\text{Phys}}$ (see Fig.~\ref{fig:general_errors}).}
    \label{fig:general_success}
\end{figure}

\begin{table}[t]
\centering
\caption{
  Verification success rates for the sub- and super-solutions of the generalized logistic equation across different error tolerances $\varepsilon$ and network architectures.
}
\label{tab:general_verification}
\begin{tabular}{c c c c c c c}
\hline
{Tolerance} & 
\multirow{2}{*}{{\#Layers}} & 
\multicolumn{5}{c}{{Success rate (\%)}}\\
\cline{3-7}
{param} ${\varepsilon}$& & 100th & 200th & 300th & 400th & 500th\\
\hline
\multirow{5}{*}{$2^{-4}$}
 & 2 & 0 & 1 & 32 & 72 & 92 \\
 & 3 & 3 & 73 & 89 & 97 & 99 \\
 & 4 & 20 & 86 & 97 & 97 & 100 \\
 & 5 & 26 & 84 & 95 & 99 & 99 \\
 & 6 & 24 & 81 & 97 & 99 & 99 \\
\hline
\multirow{5}{*}{$2^{-5}$}
 & 2 & 0 & 0 & 0 & 0 & 0 \\
 & 3 & 0 & 5 & 83 & 100 & 100 \\
 & 4 & 0 & 29 & 93 & 97 & 100 \\
 & 5 & 0 & 37 & 94 & 100 & 100 \\
 & 6 & 0 & 31 & 82 & 99 & 100 \\
\hline
\multirow{5}{*}{$2^{-6}$}
 & 2 & 0 & 0 & 0 & 0 & 0 \\
 & 3 & 0 & 0 & 2 & 79 & 97 \\
 & 4 & 0 & 0 & 13 & 84 & 99 \\
 & 5 & 0 & 0 & 5 & 35 & 82 \\
 & 6 & 0 & 0 & 1 & 11 & 41 \\
\hline
\end{tabular}
\end{table}

\begin{figure}[t]
    \centering
    \begin{tabular}{@{}cc@{}}
        \includegraphics[width=0.48\linewidth]{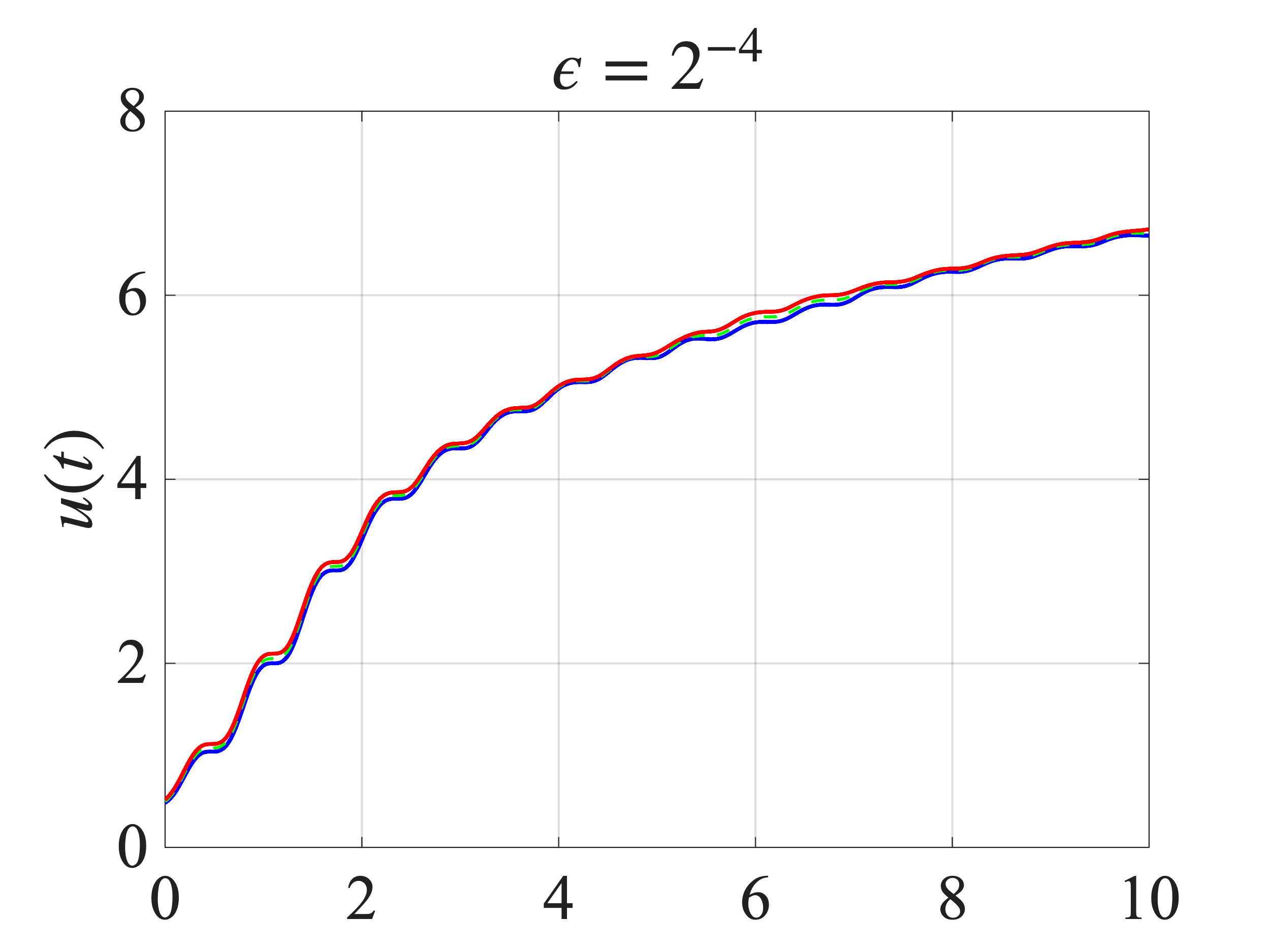} &
        \includegraphics[width=0.48\linewidth]{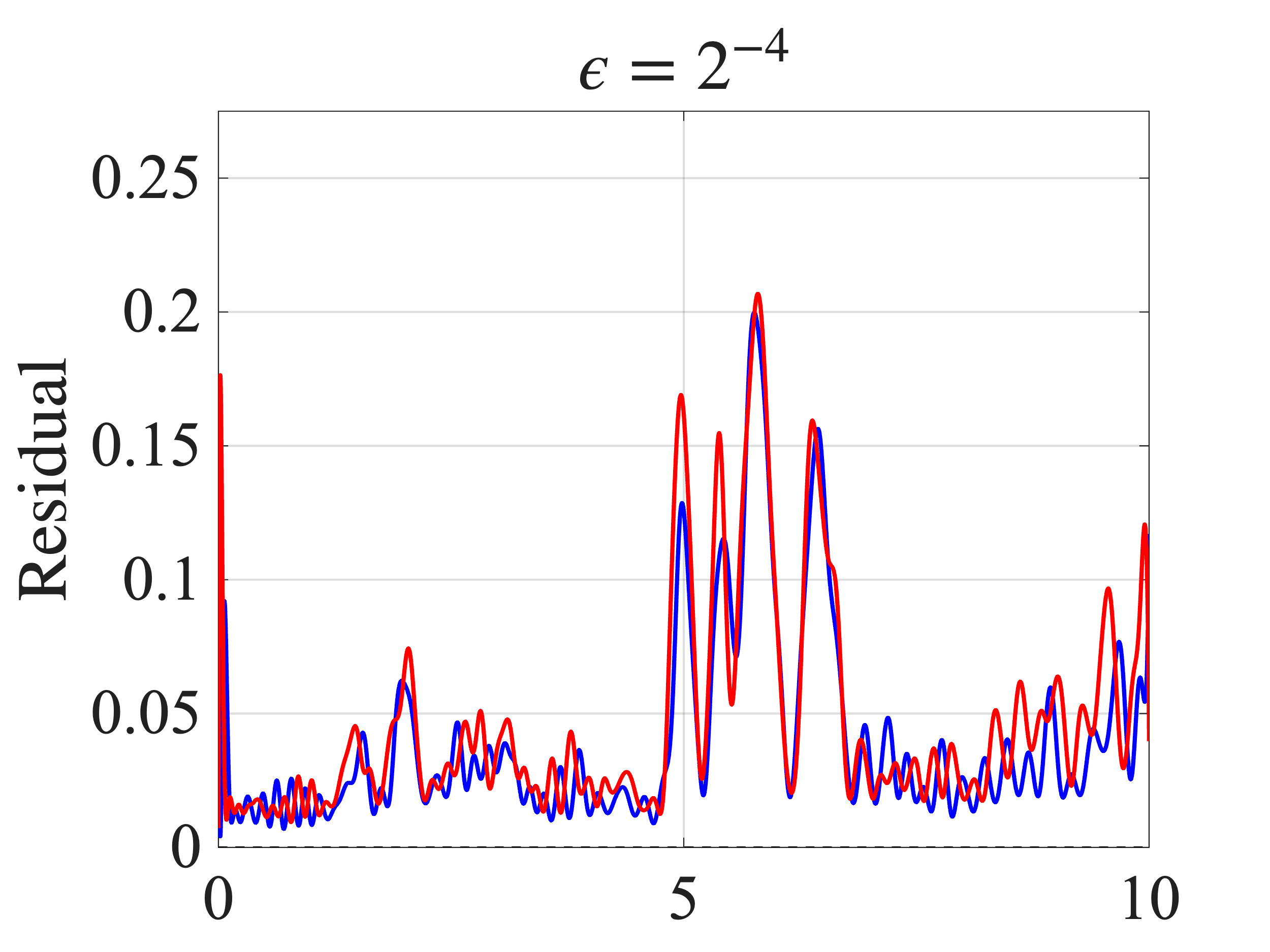} \\[0.5em]
        \includegraphics[width=0.48\linewidth]{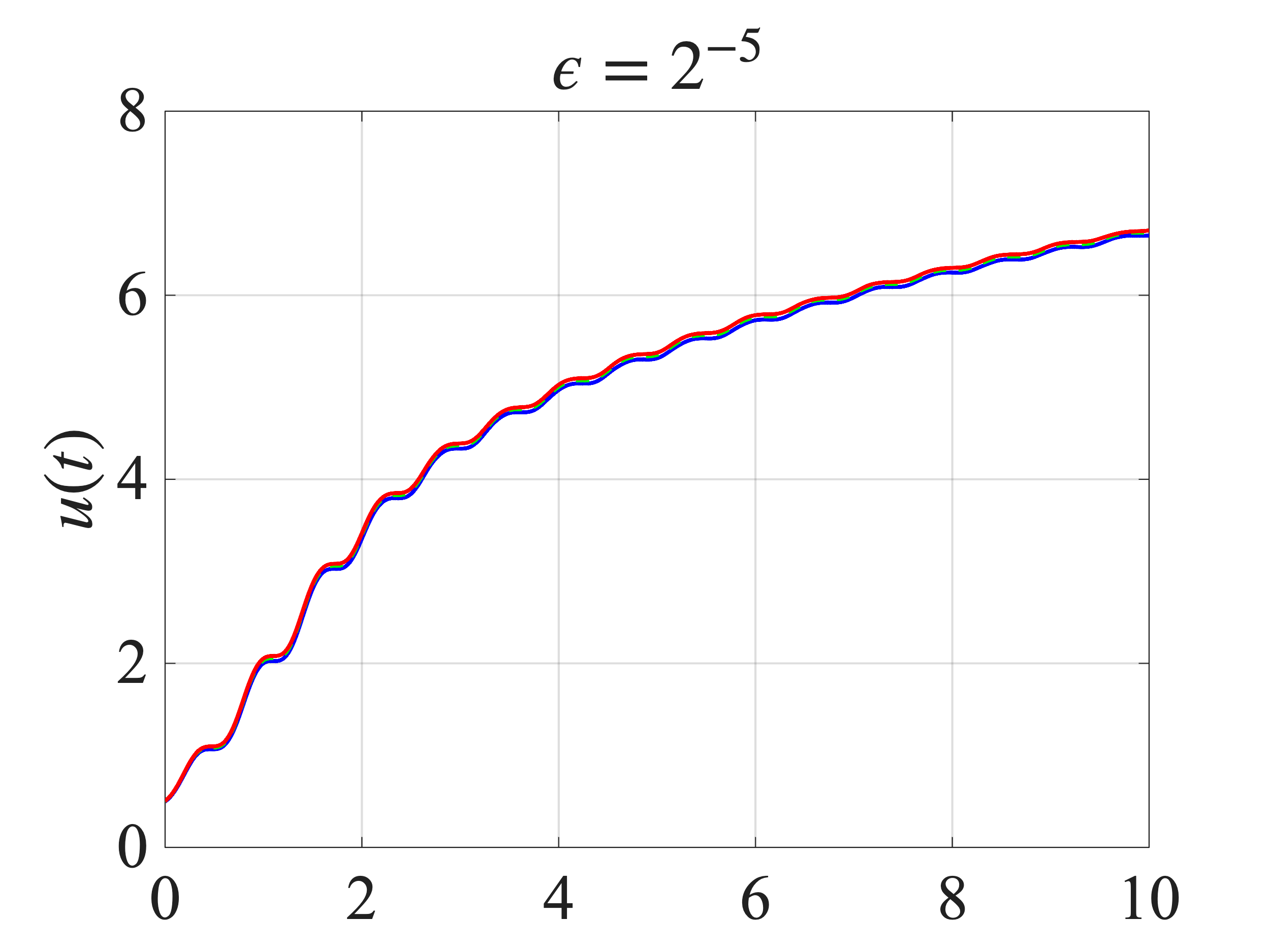} &
        \includegraphics[width=0.48\linewidth]{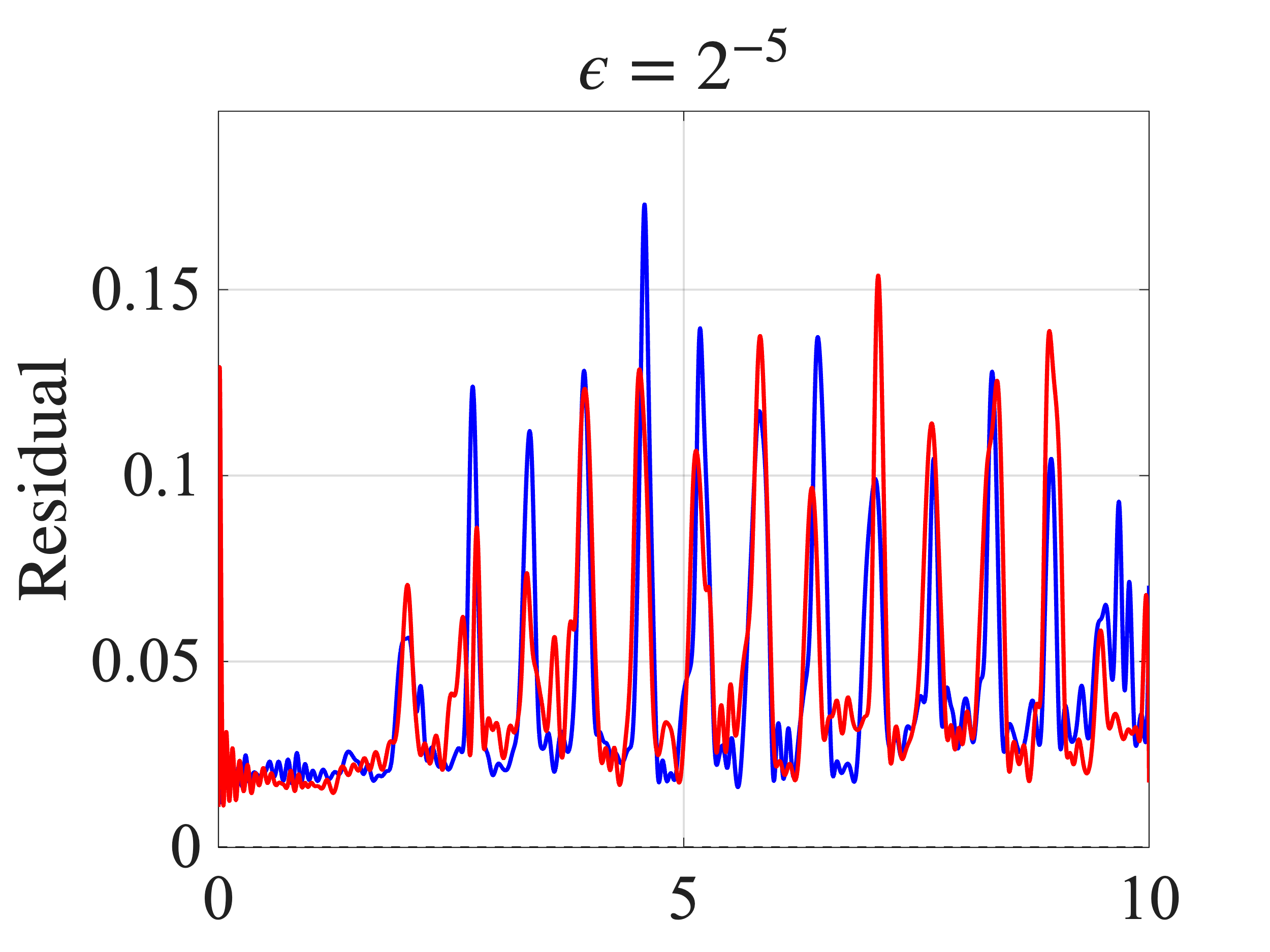} \\[0.5em]
        \includegraphics[width=0.48\linewidth]{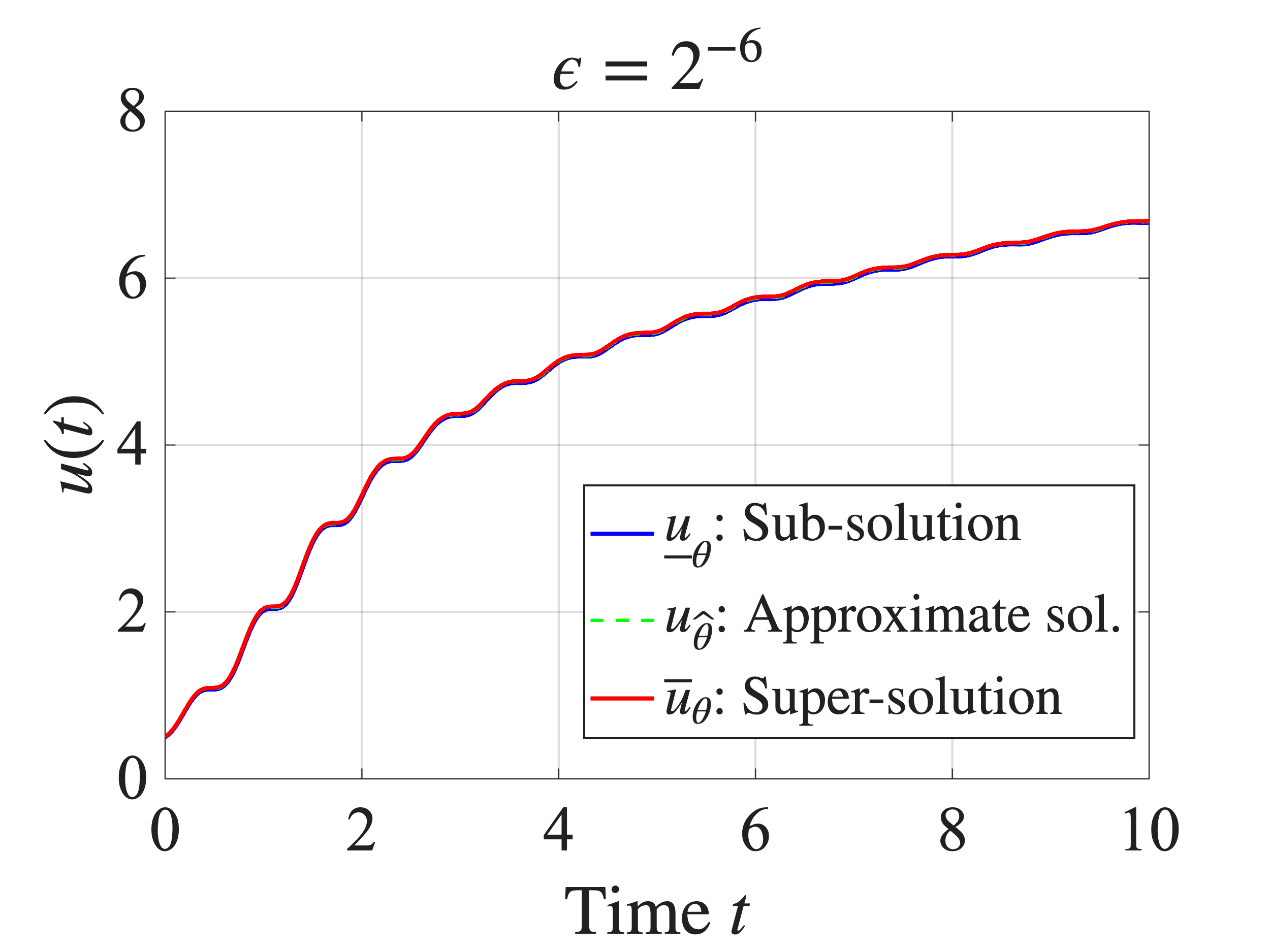} &
        \includegraphics[width=0.48\linewidth]{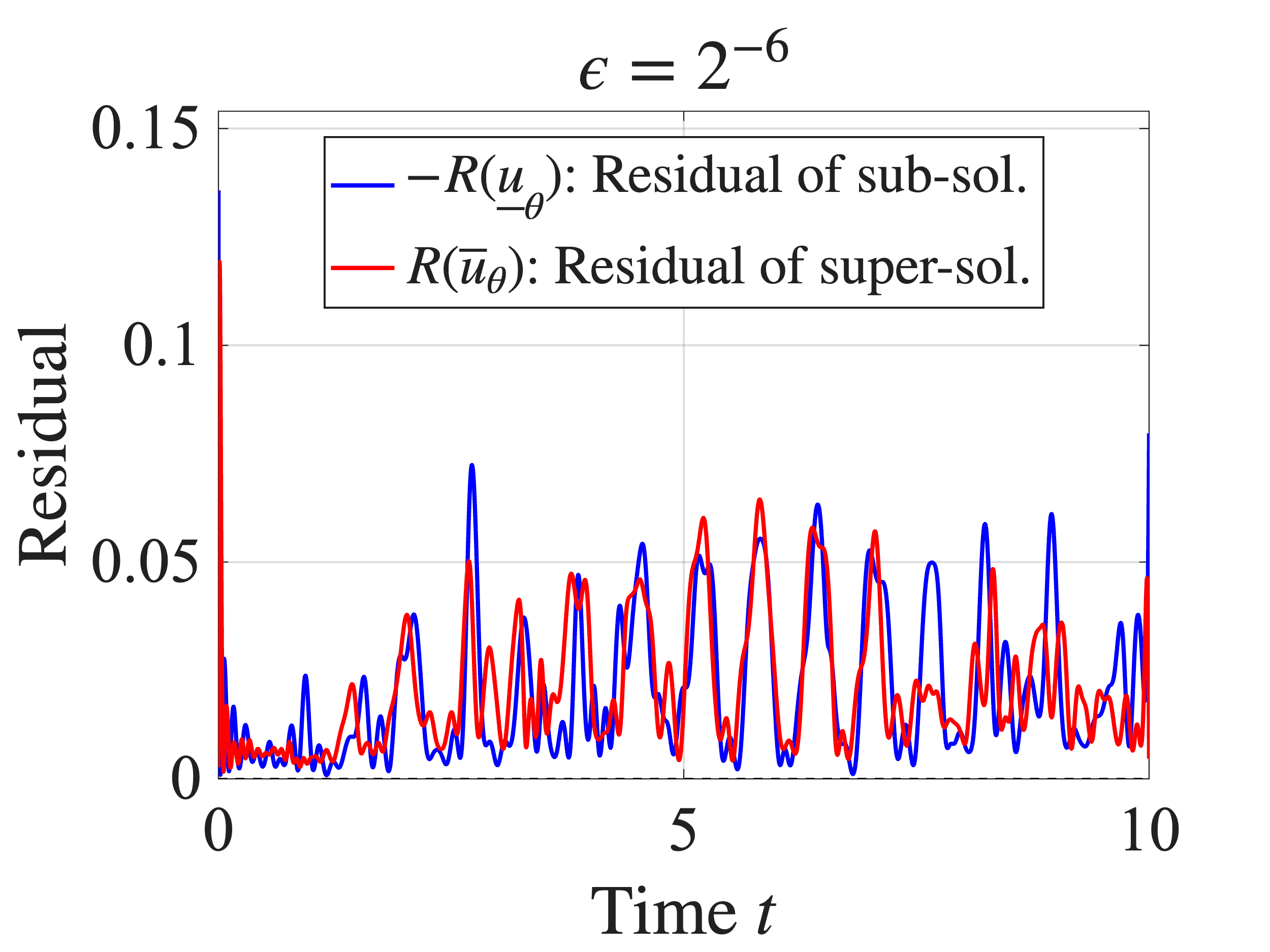}
    \end{tabular}
    \caption{Solution enclosures (left) and ODE residuals (right) for three error tolerances: $\varepsilon = 2^{-4}, 2^{-5}, 2^{-6}$ (from top to bottom). The sign of the residuals is chosen such that non-negativity correspond to a successful verification.}
    \label{fig:general_enclosure_comparison}
\end{figure}

The verification success rates are summarized in Table~\ref{tab:general_verification}, and examples of the obtained sub- and super-solutions, along with their corresponding residuals, are presented in Fig.~\ref{fig:general_enclosure_comparison}. These results yield several key observations. First, sufficient representational capacity is required to approximate the solution; notably, networks with only 2 layers (i.e., 1 hidden layer) were unsuccessful in validation for tolerances $\varepsilon\leq2^{-5}$. Second, training is more difficult compared to the previous case, as evidenced by the increased number of required epochs. This difficulty is further highlighted by the fact that the 4-layer network outperformed the 6-layer network, likely due to a superior balance between representational capacity and trainability. Conventionally, users must heuristically tune this trade-off without a reliable reference. In contrast, the proposed Learn and Verify framework enables parameter selection based on rigorous error bounds, thereby facilitating automatic tuning.

\subsection{Blow-up Solution to Riccati Equation}

Finally, we demonstrate the capability of the proposed framework to handle finite-time blow-up, a scenario where the solution trajectory diverges to infinity within a finite horizon.
Such blow-up phenomena are critical in various physical applications, including reaction-diffusion systems, nonlinear wave equations, and geometric flows, where the blow-up time often corresponds to a phase transition or system failure.
A distinguishing feature of our approach is its ability to rigorously capture the blow-up time by constructing sub- and super-solutions that enclose the singularity.
To illustrate this capability, we consider the Riccati equation: 
\begin{align}
	\label{eq:riccati}
	\frac{du}{dt}(t)=t^2+u(t)^2.
\end{align}
It is well-established that solutions to this equation originating from any non-negative initial value $u(0) = a \ge 0$ will exhibit finite-time blow-up. For simplicity, we focus on the specific case where $u(0)=0$.
In this scenario, the solution diverges near $t=2$ (specifically, slightly beyond $t=2$), as illustrated in Fig.~\ref{fig:Riccati_blowup}. The rapid growth of the solution as $t$ approaches the blow-up time exemplifies the challenges associated with numerical integration near a singularity.

\begin{figure}[t]
    \centering
    \includegraphics[width=0.6\linewidth]{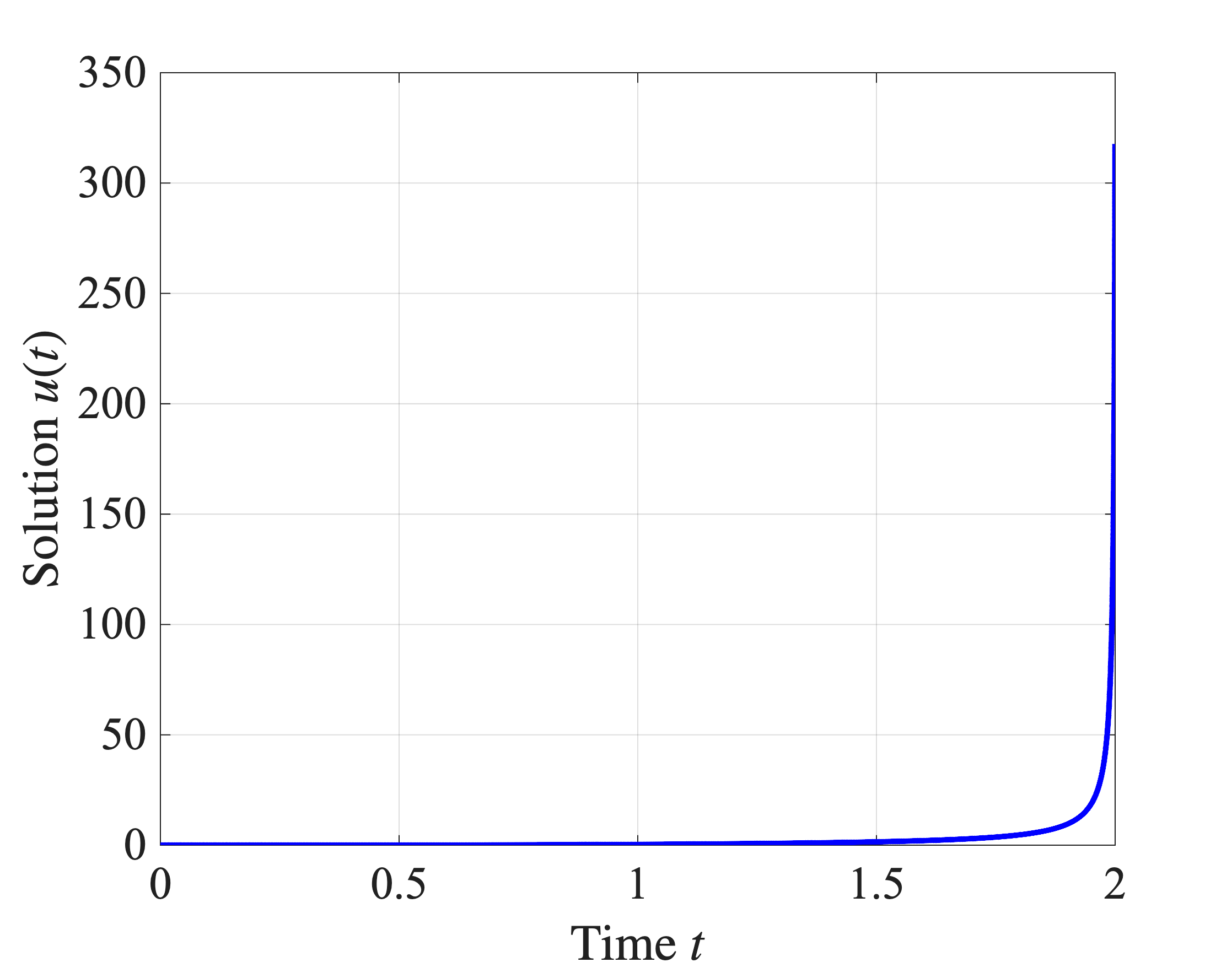}
    \caption{Finite-time blow-up of the solution to the Riccati equation with initial condition $u(0) = 0$. The solution approaches a vertical asymptote at the blow-up time $T \approx 2$.}
    \label{fig:Riccati_blowup}
\end{figure}

To address the finite-time blow-up, we propose a four-step strategy: (1) apply a variable transformation to regularize the singularity; (2) construct sub- and super-solutions on a finite interval up to a predetermined lower bound of the blow-up time; (3) verify the sub- and super-solutions to guarantee the existence of the true solution within the computational domain; and (4) extend the sub-solution using an analytic lower bound to derive an upper bound for the blow-up time. As this approach necessitates a problem-specific formulation, we detail its application to the Riccati equation.

The first step is the variable transformation. A solution exhibiting finite-time blow-up typically possesses a singularity of the form $u(t) \sim (T_{\text{bu}}-t)^{-\alpha}$, where $T_{\text{bu}}$ represents the blow-up time, and $\alpha$ denotes the blow-up rate. Since the generic nonlinearity $du/dt = u^p$ implies a divergence scaling of $u \sim (T_{\text{bu}}-t)^{-1/(p-1)}$, the quadratic term in the Riccati equation ($p=2$) yields $\alpha=1$. Based on this asymptotic behavior, we employ the following transformation:
\begin{align}
    u(t) = \frac{1}{c-t}\phi(t)\quad(c>0),
\end{align}
where $c$ is an estimate of the blow-up time. This regularization strategy is applicable to general equations, provided the blow-up rate can be estimated.

Applying the variable transformation to the original ODE yields the following equation for $\phi$:
\begin{align}
    \frac{d\phi}{dt}(t) = (c-t)\,t^2 +\frac{\phi(t)^2 - \phi(t)}{c-t}.
\end{align}
Subsequently, we approximate $\phi$ and construct sub- and super-solutions on the interval $t \in [0, \tilde{T}]$, where $\tilde{T}$ is a predetermined lower bound for the blow-up time. In this instance, we selected $\tilde{T} = 2 - 2^{-4}$ ($= 1.9375$). Because the computational domain $[0, \tilde{T}]$ excludes the singularity, the neural networks can be trained using standard procedures (provided $\tilde{T}<c$).

The computed sub- and super-solutions are then subjected to the verification procedure defined in our framework. A successful verification guarantees the existence of the true solution within the derived enclosure. Crucially, this implies that the solution remains finite on the interval $[0, \tilde{T}]$, thereby establishing $\tilde{T}$ as a rigorous lower bound for the true blow-up time. Although the parameters $c$ and $\tilde{T}$ must be selected by users, they can be iteratively refined based on the verification outcomes%
\footnote{Both of these parameters could potentially be treated as trainable parameters, optimized simultaneously with the sub- and super-solutions to achieve tighter enclosures. This exploration is left for future investigation.}.
Such systematic adjustment is significantly more efficient than groundless heuristics, as the rigorous error bounds provide reliable feedback for parameter tuning.

To determine an upper bound for the true blow-up time $T_{\text{bu}}$, we extend the sub-solution beyond the computational domain. Starting from $t = \tilde{T}$, we construct a sub-solution for the interval $\tilde{T} \leq t \leq T_{\text{bu}}$ by considering an auxiliary ODE that admits an analytic solution and strictly lower-bounds the true solution%
\footnote{We remark that the availability of an explicit analytic solution is not a strict requirement. Any method capable of establishing a rigorous upper bound for the true blow-up time $T_{\text{bu}}$ is sufficient. We selected this specific approach primarily to simplify the exposition.}.
Specifically, we consider
\begin{align}
    \label{eq:ODEforSubSol}
    \frac{d\ul{u}}{dt}(t) = \ul{u}(t)^2
\end{align}
with the initial condition $\ul{u}(\tilde{T})$ taken from the numerically computed sub-solution $\ul{u}_\theta(\tilde{T})$. Since the right-hand side of \eqref{eq:riccati} strictly dominates that of \eqref{eq:ODEforSubSol} (i.e., $t^2 + u^2 \ge u^2$), the solution to \eqref{eq:ODEforSubSol} is, by definition, a sub-solution to \eqref{eq:riccati} and thus remains below the true solution. Analytic solution to \eqref{eq:ODEforSubSol} is
\begin{align}
    \ul{u}(t) = \frac{\ul{u}_\theta(\tilde{T})}{1-\ul{u}_\theta(\tilde{T})\,t} \quad(\tilde{T} \leq t \leq T_{\text{bu}}),
\end{align}
which diverges at $t = 1/\ul{u}_\theta(\tilde{T})$. Since the true solution $u(t)$ is greater than $\ul{u}$, it must blow up no later than $t = 1/\ul{u}_\theta(\tilde{T})$. Thus, we obtain the rigorous upper bound:
\begin{align}
    T_{\text{bu}}\leq\tilde{T}+\frac{1}{\ul{u}_\theta(\tilde{T})}.
\end{align}
Evidently, a tighter (larger) value for $\ul{u}_\theta(\tilde{T})$ results in a tighter upper bound for $T_{\text{bu}}$. Note that while this extension may introduce a discontinuity in the derivative at the connection point $t = \tilde{T}$, the piecewise $C^1$ condition in Theorem~\ref{cor:ode-localsol} permits this irregularity.

\begin{table}[t]
\centering
\caption{
  Verification results for the blow-up time enclosure of the Riccati equation with the initial condition $u(0)=0$.
}
\label{table:Riccati}
\renewcommand{\arraystretch}{1.3}
\begin{tabular}{ccccc}
\hline
$\varepsilon$ & $c$ & $\underline{u}(\tilde{T})$ & Lower bound for $T_{\text{bu}}$ & Upper bound for $T_{\text{bu}}$ \\
\hline
$2^{-2}$ & 2.00 & 12.119 & 1.9375 & 2.020 \\
$2^{-3}$ & 2.00 & 14.029 & 1.9375 & 2.009 \\
$2^{-4}$ & 2.00 & 14.893 & 1.9375 & 2.005 \\
\hline
\end{tabular}
\renewcommand{\arraystretch}{1}
\end{table}

\begin{figure}[t]
    \centering
    \begin{tabular}{@{}cc@{}}
        \includegraphics[width=0.48\linewidth]{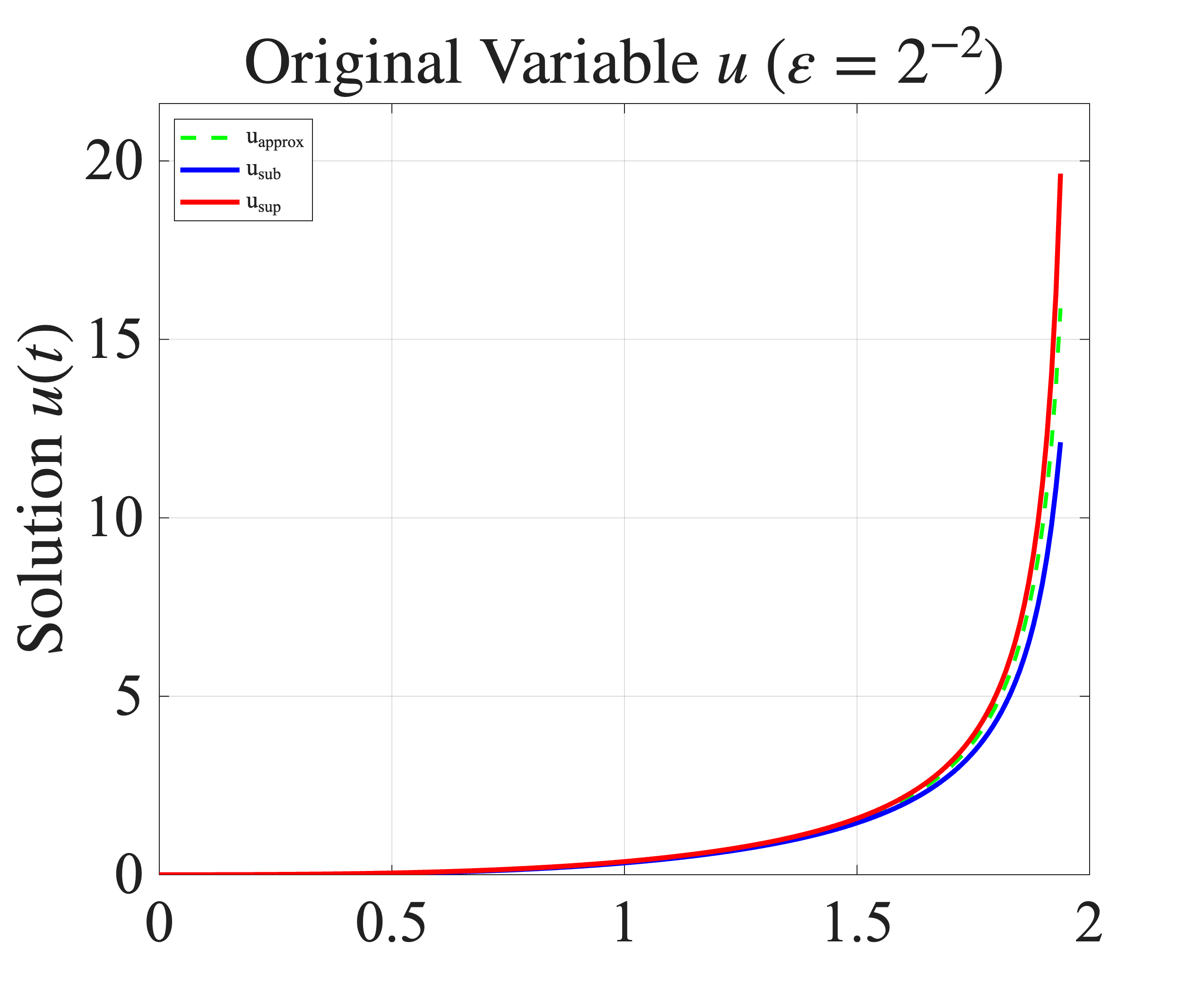} &
        \includegraphics[width=0.48\linewidth]{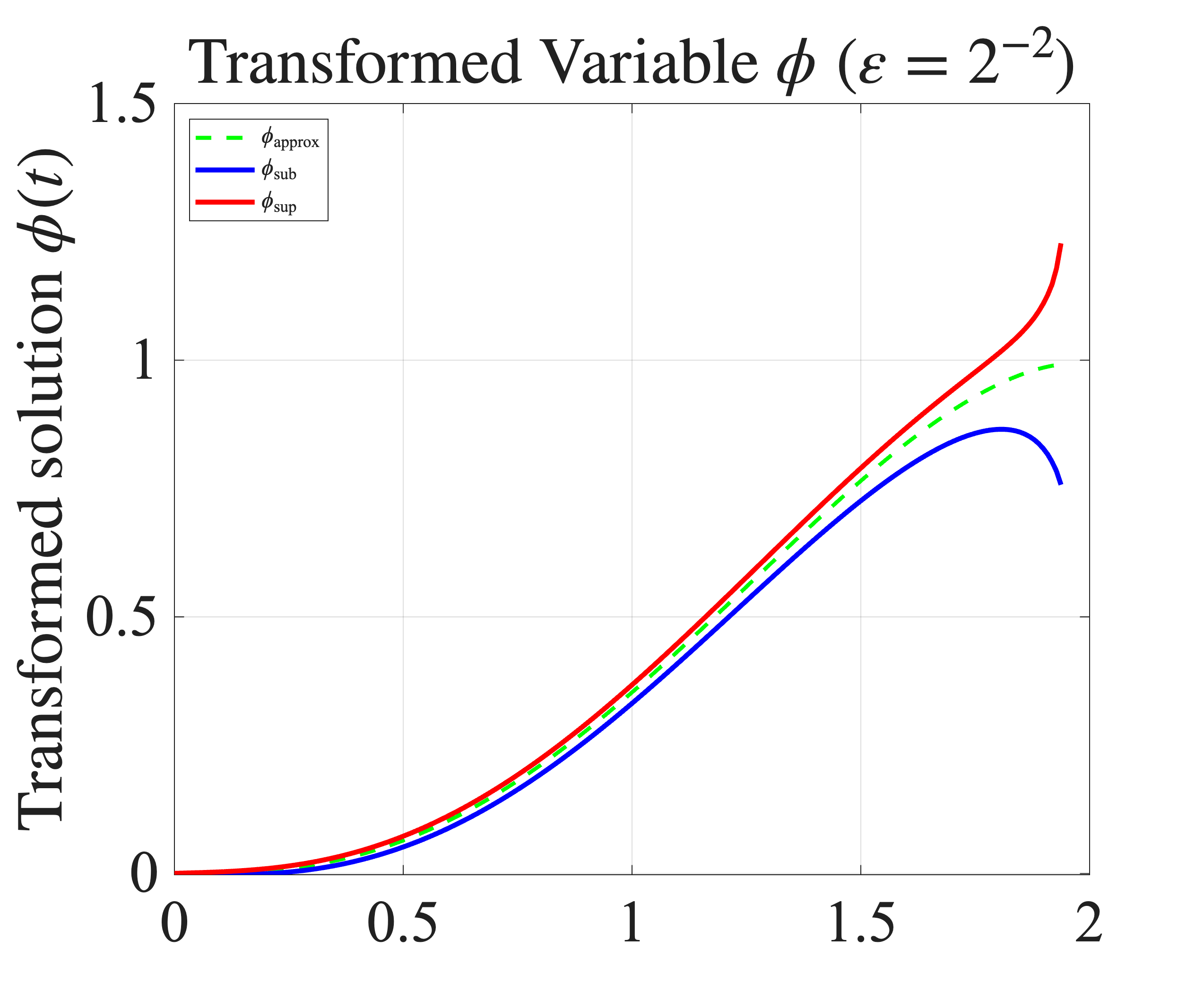} \\[0.5em]
        \includegraphics[width=0.48\linewidth]{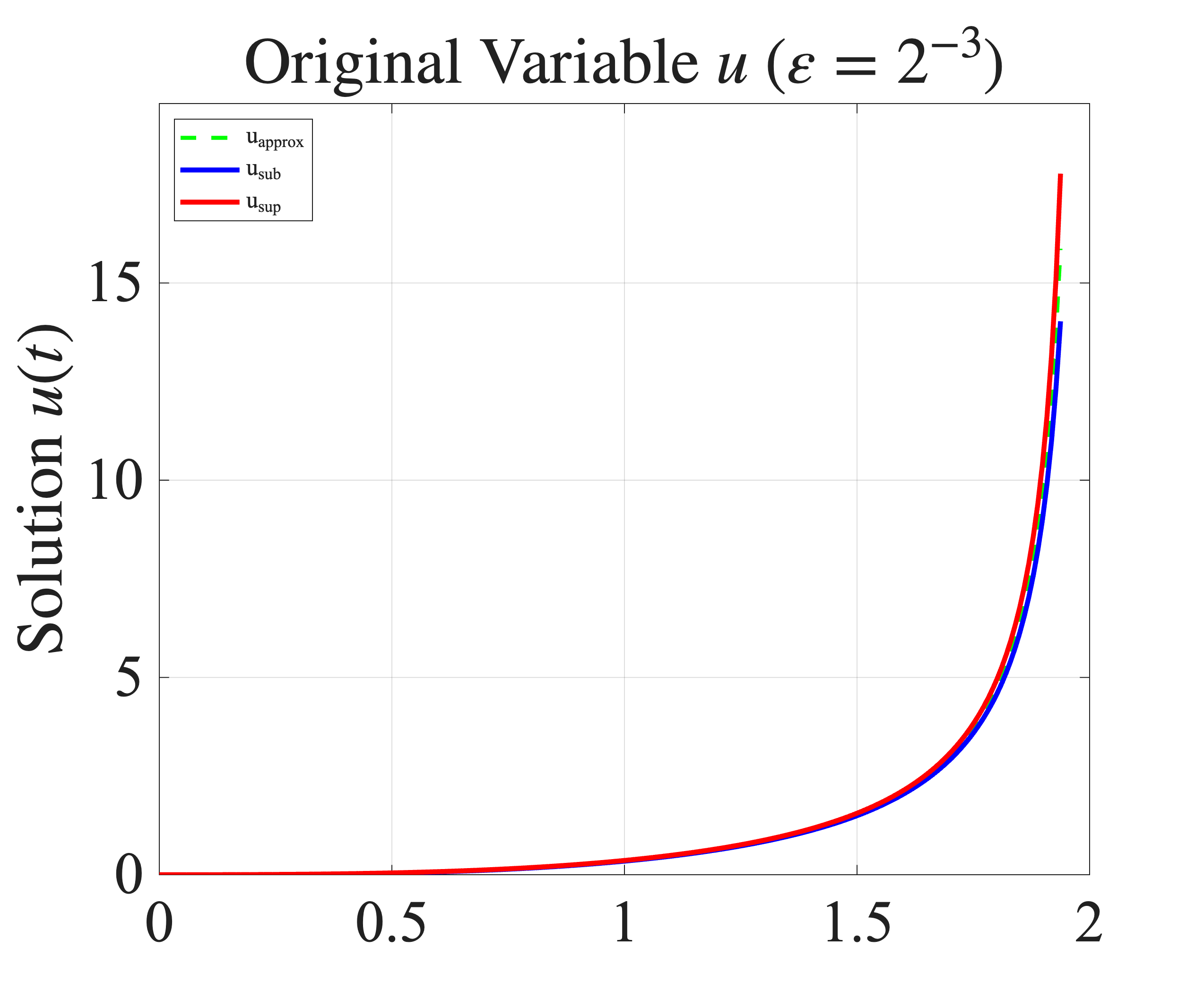} &
        \includegraphics[width=0.48\linewidth]{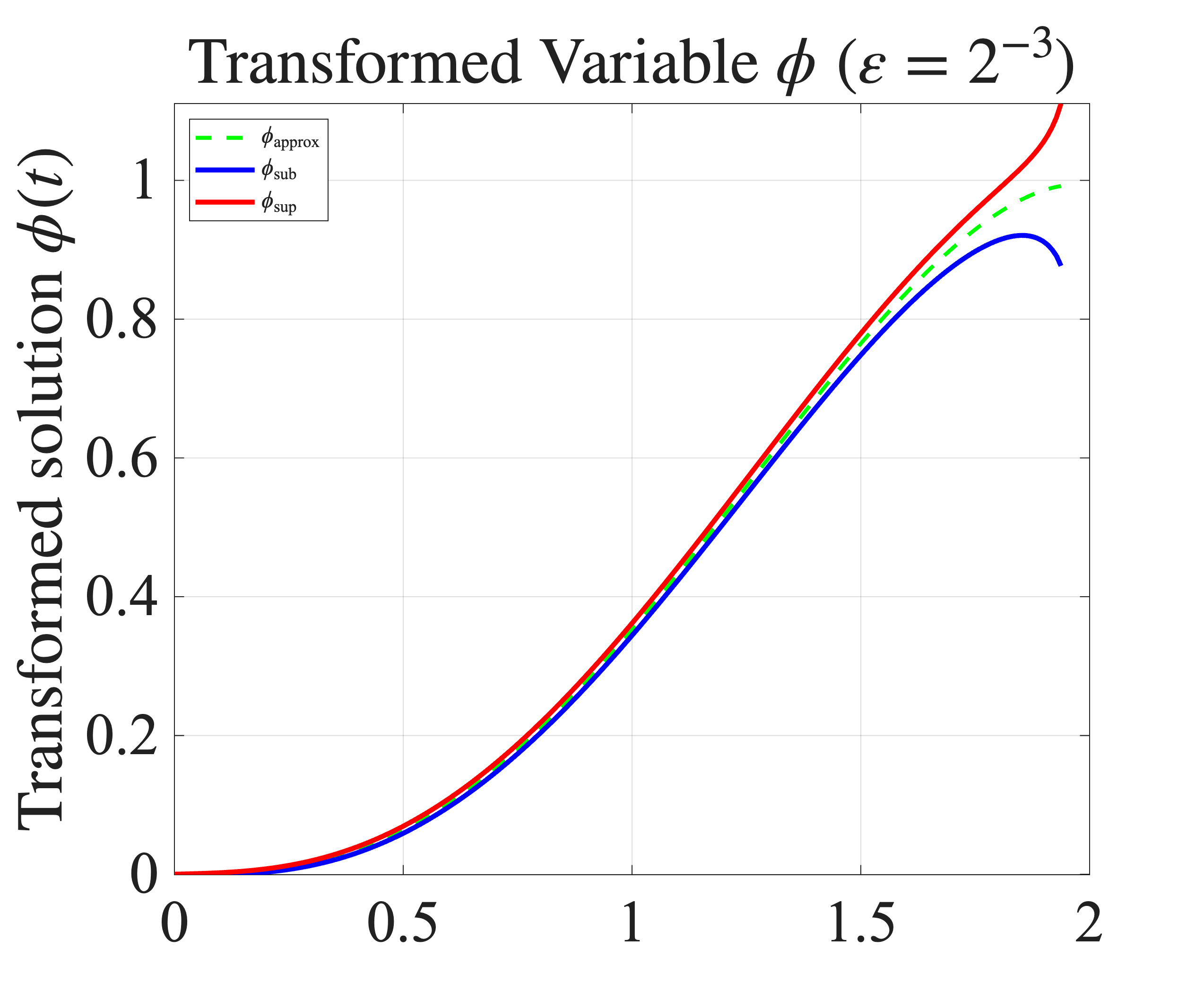} \\[0.5em]
        \includegraphics[width=0.48\linewidth]{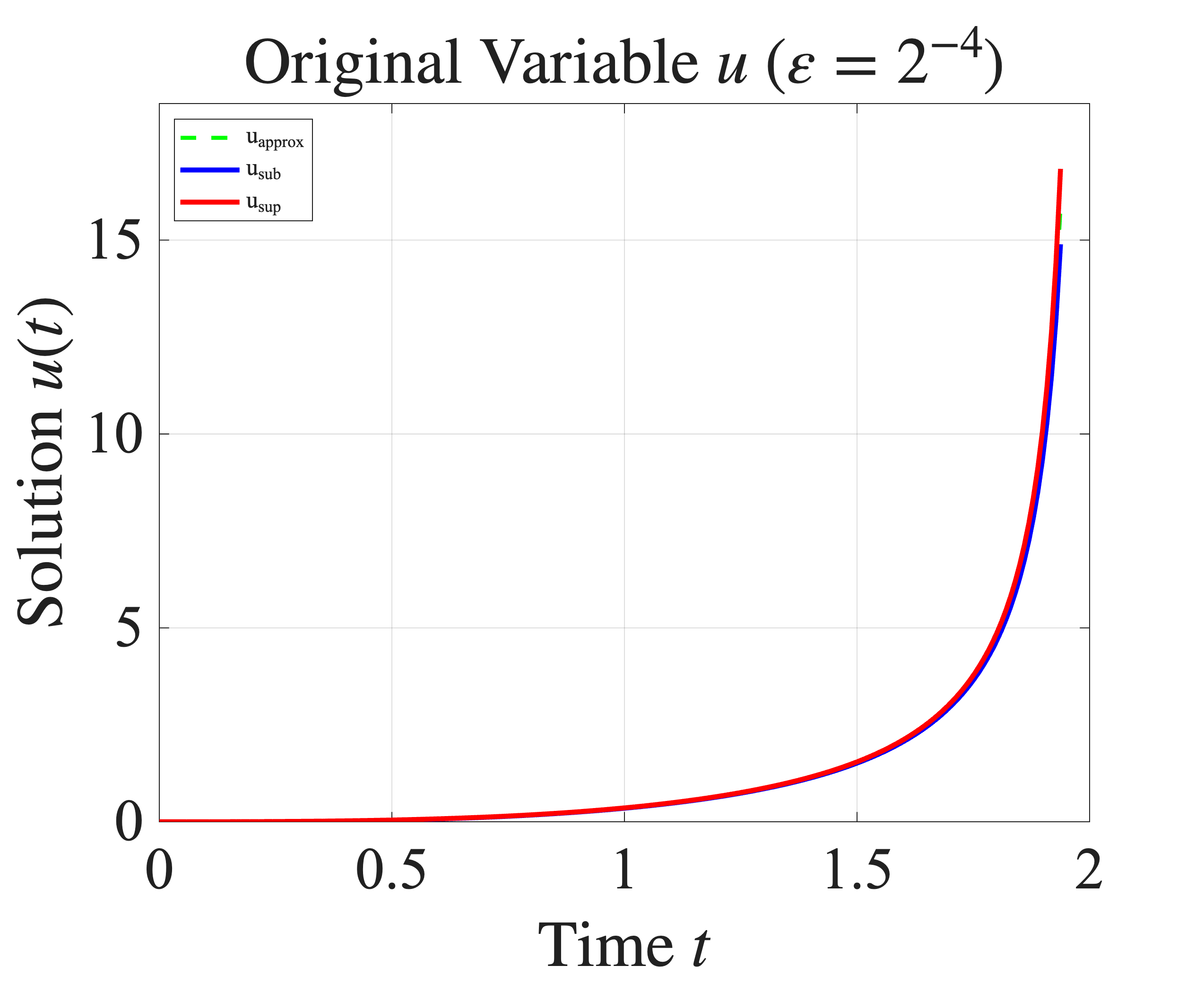} &
        \includegraphics[width=0.48\linewidth]{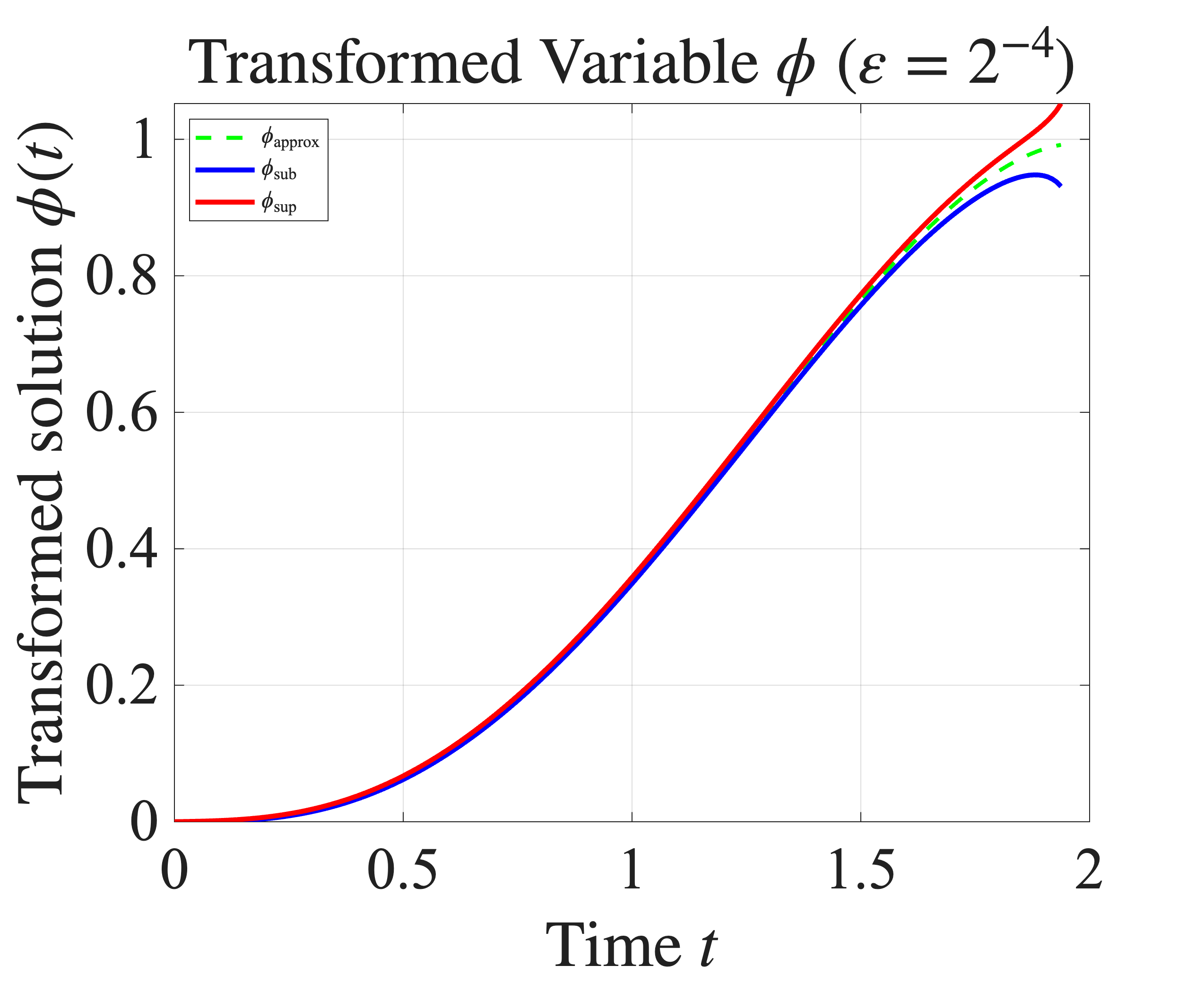}
    \end{tabular}
    \caption{Solution enclosures for original variable $u$ (left) and transformed variable $\phi$ (right) with $\varepsilon = 2^{-2}, 2^{-3}, 2^{-4}$ (from top to bottom).}
    \label{fig:Riccati_enclosure}
\end{figure}

Table~\ref{table:Riccati} summarizes the verification results for enclosing the blow-up time $T_{\text{bu}}$, while Fig.~\ref{fig:Riccati_enclosure} presents the solution enclosures for both the original variable $u$ and the transformed variable $\phi$ over the interval $t\in[0,\tilde{T}]$. As demonstrated in Table~\ref{table:Riccati}, decreasing the error tolerance $\varepsilon$ yields tighter enclosures of the blow-up time. This improvement is primarily driven by the value of $\underline{u}_\theta(\tilde{T})$, which increases as $\varepsilon$ decreases, thereby resulting in a stricter upper bound. Conversely, the lower bound remains constant because the parameter $\tilde{T}$ was fixed for these experiments. However, the rigorous verification provided by the Learn and Verify framework allows $\tilde{T}$ to be systematically adjusted to tighten the lower bound without compromising the validity of the results.

\section{Conclusions}\label{sec:conclusion}

In this paper, we have presented a novel ``Learn and Verify'' framework that addresses a fundamental limitation in the application of deep learning to differential equations: the absence of rigorous error bounds.
Focusing on ODEs, we proposed a two-stage methodology that incorporates specific model structures and tailored loss functions.
Crucially, our framework enables the systematic verification of numerical solutions without requiring a specific reference (e.g., a closed-form solution).
Our three numerical examples demonstrated the effectiveness of this mathematical verification for nonlinear ODEs.
Future work includes extending the proposed framework to a larger class of differential equations, as well as developing a scheme for the automatic tuning of parameters.

\appendices
\section{Property of DSM}\label{sec:proofDSM}

The DSM function defined in \eqref{dsm} exhibits the following relationship with the sample size $|\sample|$.
\begin{Prop}
  \label{prop:dsm_error_bound}
  Let $\mathcal{S}$ be a finite set of sampling points with cardinality $|\mathcal{S}|$.
  For any function $g: \mathcal{S} \to \mathbb{R}$, let
  \begin{align}
    \label{eq:dsm_max_violation}
    M = \max_{t \in \mathcal{S}} \max(g(t),0)
  \end{align}
  denote the true maximum positive violation.
  Then the DSM function defined in \eqref{dsm} satisfies
  \begin{align}
    \label{eq:dsm_bound}
    M < \DSM_{c_1,c_2}^{\sample}[g(t)]
      \le M + c_1 \log 2 + c_2 \log |\mathcal{S}|.
  \end{align}
\end{Prop}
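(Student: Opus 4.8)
The plan is to exploit the explicit compositional structure of the DSM operator. Writing the Softplus function $s_{c_1}(x) := c_1\log(1+e^{x/c_1})$, a direct rearrangement of the summand gives
\[
\left(1+\exp\!\left(\frac{g(t)}{c_1}\right)\right)^{c_1/c_2}=\exp\!\left(\frac{s_{c_1}(g(t))}{c_2}\right),
\]
so that $\DSM_{c_1,c_2}^{\sample}[g(t)] = c_2\log\sum_{t\in\sample}\exp\!\big(s_{c_1}(g(t))/c_2\big)$ is precisely a Log-Sum-Exp aggregation of the Softplus-smoothed values. I would therefore isolate two elementary estimates---one for Softplus and one for Log-Sum-Exp---and chain them.

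First I would prove the pointwise Softplus bound $(x)_+ < s_{c_1}(x) \le (x)_+ + c_1\log 2$ valid for every $x\in\mb{R}$. The key is the identity $s_{c_1}(x) = (x)_+ + c_1\log(1+e^{-|x|/c_1})$, which one verifies by factoring $e^{x/c_1}$ out of the logarithm in the case $x\ge 0$ and reading off the case $x<0$ directly; the bound then follows from $0 < c_1\log(1+e^{-|x|/c_1}) \le c_1\log 2$. Setting $M^\flat := \max_{t\in\sample} s_{c_1}(g(t))$ and taking the maximum of the pointwise bound over $\sample$, I obtain $M < M^\flat \le M + c_1\log 2$; crucially, the strict lower inequality is inherited from the strict Softplus bound evaluated at a point attaining $M = \max_t (g(t))_+$.

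Next I would record the standard Log-Sum-Exp sandwich: for reals $y_1,\dots,y_n$ with $n=|\sample|$,
\[
\max_i y_i \;\le\; c_2\log\sum_{i=1}^{n} e^{y_i/c_2} \;\le\; \max_i y_i + c_2\log n,
\]
where the lower bound follows by keeping only the maximal summand and the upper bound by dominating each summand by $e^{(\max_i y_i)/c_2}$. Applying this with $y_t = s_{c_1}(g(t))$ yields $M^\flat \le \DSM_{c_1,c_2}^{\sample}[g(t)] \le M^\flat + c_2\log|\sample|$.

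Finally I would combine the two sandwiches: the chain $M < M^\flat \le \DSM_{c_1,c_2}^{\sample}[g(t)]$ gives the strict lower estimate, and $\DSM_{c_1,c_2}^{\sample}[g(t)] \le M^\flat + c_2\log|\sample| \le M + c_1\log 2 + c_2\log|\sample|$ gives the upper estimate, completing \eqref{eq:dsm_bound}. I do not anticipate a genuine obstacle, as the argument is routine real-variable estimation; the only point requiring care is the bookkeeping of strictness---ensuring the strict gap in the conclusion is traced to the strict Softplus bound rather than to the Log-Sum-Exp lower bound, which degenerates to equality when $|\sample|=1$.
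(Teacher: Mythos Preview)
Your proposal is correct and follows essentially the same route as the paper: rewrite $\DSM$ as a Log-Sum-Exp of the Softplus values, then chain the pointwise Softplus sandwich $(x)_+ < s_{c_1}(x)\le (x)_+ + c_1\log 2$ with the standard Log-Sum-Exp sandwich. Your explicit identity $s_{c_1}(x)=(x)_+ + c_1\log(1+e^{-|x|/c_1})$ and your remark on tracking the strictness through the Softplus step (rather than the Log-Sum-Exp step, which is tight when $|\sample|=1$) add a little more detail than the paper gives, but the structure is identical.
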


\begin{proof}
  The DSM function can be viewed as a composition of an inner smooth approximation
  and an outer aggregation. Let $h(x) := c_1 \log(1 + \exp(x/c_1))$.
  Then for any $x \in \mathbb{R}$,
  \begin{align}
    \max(x, 0) < h(x) \le \max(x, 0) + c_1 \log 2.
    \label{eq:softplus_bound}
  \end{align}
  Substituting $h(g(t))$ into the definition, we can rewrite \eqref{dsm} as
  \[
    \DSM_{c_1,c_2}^{\sample}[g(t)]
     = c_2 \log \left( \sum_{t \in \mathcal{S}}
           \exp\left( \frac{h(g(t))}{c_2} \right) \right).
  \]
  For any set $\{y_t\}_{t\in\mathcal{S}}$ we have
  \begin{align}
    \max_{t} y_t
    \le c_2 \log \left( \sum_{t} \exp(y_t/c_2) \right)
    \le \max_{t} y_t + c_2 \log |\mathcal{S}|.
    \label{eq:lse_bound}
  \end{align}
  Setting $y_t = h(g(t))$ and combining \eqref{eq:softplus_bound} with
  \eqref{eq:lse_bound} result in
  \begin{align*}
    \DSM_{c_1,c_2}^{\sample}[g(t)]
    &\ge \max_{t \in \mathcal{S}} h(g(t))
      > \max_{t \in \mathcal{S}} (\max(g(t), 0)) = M,\\[0.3em]
    \DSM_{c_1,c_2}^{\sample}[g(t)]
    &\le \max_{t \in \mathcal{S}} h(g(t)) + c_2 \log |\mathcal{S}| \\
    &\le \max_{t \in \mathcal{S}} (\max(g(t), 0) + c_1 \log 2) + c_2 \log |\mathcal{S}| \\
    &= M + c_1 \log 2 + c_2 \log |\mathcal{S}|.
  \end{align*}
  This proves the claimed bound.
\end{proof}

The inequality \eqref{eq:dsm_bound} indicates that the parameter $c_2$ should scale with the logarithm of the sample size $|\mathcal{S}|$.
Consequently, when varying the sample size, adjusting $c_2$ according to this logarithmic scale is theoretically motivated and may enhance the stability and consistency of the method.
The determination of their optimal values is left for future work.

\section{Stable Implementation of DSM}\label{sec:stableDSM}

The direct evaluation of the DSM function defined in \eqref{dsm} is susceptible to numerical instability (i.e., arithmetic overflow) particularly when the parameter $c_1$ is small or when the input $g(t)$ takes large values. To mitigate these issues, we employ a stabilization technique (analogous to the Log-Sum-Exp trick) using the maximum positive violation $M$ defined in Proposition \ref{prop:dsm_error_bound}.
The procedure is as follows: using $M = \max_{t \in \mathcal{S}} \max(g(t), 0)$ from \eqref{eq:dsm_max_violation}, we reformulate the DSM function to factor out the dominant term:
\begin{align*}
    \DSM_{c_1,c_2} [g(t)] 
    &= c_2 \log \Bigg( \exp\left( \frac{M}{c_2} \right) \cdot \sum_{t \in \mathcal{S}} \Big( \exp\left(-\frac{M}{c_1}\right) \\
    &\qquad\qquad\qquad + \exp\left( \frac{g(t) - M}{c_1} \right) \Big)^{\frac{c_1}{c_2}} \Bigg) \\
    &= M + c_2 \log \Bigg( \sum_{t \in \mathcal{S}} \Big( \exp\left(-\frac{M}{c_1}\right) \\
    &\qquad\qquad\qquad + \exp\left( \frac{g(t) - M}{c_1} \right) \Big)^{\frac{c_1}{c_2}} \Bigg).
\end{align*}
By definition, $g(t) \leq M$; consequently, the exponential term $\exp((g(t)-M)/c_1)$ is always bounded in the interval $(0,1]$, ensuring the computation remains numerically stable regardless of the magnitude of the inputs.
This scaling operation is critical in practical implementations and allows for the use of small values for $c_1$ (e.g., $c_1=10^{-3}$), which are necessary to achieve a tight approximation of the maximum function for accurate sub- and super-solutions.

\section*{Acknowledgment}
This research was supported by JST Fusion Oriented Research for disruptive Science and Technology (FOREST) Program (Grant Number JPMJFR202S).

\ifCLASSOPTIONcaptionsoff
  \newpage
\fi

\bibliographystyle{IEEEtran}
\bibliography{ref.bib}

@book{Ladde1985,
  author    = {Ladde, G.S. and Lakshmikantham, V. and Vatsala, A.S.},
  title     = {Monotone Iterative Techniques for Nonlinear Differential Equations},
  publisher = {Pitman Advanced Publishing Program},
  address   = {London},
  year      = {1985}
}

@article{raissi2019physics,
  title     = {Physics-informed neural networks: A deep learning framework for solving forward and inverse problems involving nonlinear partial differential equations},
  author    = {Raissi, Maziar and Perdikaris, Paris and Karniadakis, George E},
  journal   = {J. Comput. Phys.},
  volume    = {378},
  pages     = {686--707},
  year      = {2019},
  publisher = {Elsevier}
}

@article{karniadakis2021physics,
  title     = {Physics-informed machine learning},
  author    = {Karniadakis, George Em and Kevrekidis, Ioannis G and Lu, Lu and Perdikaris, Paris and Wang, Sifan and Yang, Liu},
  journal   = {Nat. Rev. Phys.},
  volume    = {3},
  number    = {6},
  pages     = {422--440},
  year      = {2021},
  publisher = {Nature Publishing Group}
}

@article{cuomo2022scientific,
  title   = {Scientific Machine Learning through Physics-Informed Neural Networks: Where we are and What's next},
  author  = {Cuomo, Salvatore and Di Cola, Vincenzo Schiano and Giampaolo, Fabio and Rozza, Gianluigi and Raissi, Maziar and Piccialli, Francesco},
  journal = {arXiv:2201.05624},
  year    = {2022}
}

@inproceedings{sitzmann2020implicit,
  title     = {Implicit Neural Representations with Periodic Activation Functions},
  author    = {Sitzmann, Vincent and Martel, Julien N. P. and Bergman, Alexander W. and Lindell, David B. and Wetzstein, Gordon},
  booktitle = {Adv. Neural Inf. Process. Syst. (NeurIPS)},
  volume    = {33},
  pages     = {7462--7473},
  year      = {2020}
}

@misc{kashiwagi_kv,
  title        = {kv - {A} {C++} Library for Verified Numerical Computation},
  author       = {Kashiwagi, Masahide},
  year         = {2024},
  howpublished = {\url{http://verifiedby.me/kv/index-e.html}},
  note         = {Accessed: 2024}
}

@book{Klatte1993cxsc,
  title     = {{C-XSC}: A {C++} Class Library for Extended Scientific Computing},
  author    = {Klatte, Rudi and Kulisch, Ulrich and Lawo, Michael and Rauch, Michael and Wiethoff, Andreas},
  year      = {1993},
  publisher = {Springer-Verlag},
  address   = {Berlin, Heidelberg},
  doi       = {10.1007/978-3-642-58029-6}
}

@article{Nedialkov2010vnode,
  author    = {Nedialkov, Nedialko S.},
  title     = {{VNODE-LP}: A Validated Solver for Initial Value Problems in Ordinary Differential Equations},
  journal   = {ACM Trans. Math. Softw.},
  volume    = {37},
  number    = {3},
  pages     = {28:1--28:24},
  year      = {2010},
  publisher = {ACM},
  doi       = {10.1145/1824801.1824805}
}

@article{Kapela2011capd,
  author    = {Kapela, Tomasz and Mrozek, Marian and Wilczak, Daniel and Zgliczynski, Piotr},
  title     = {{CAPD::DynSys}: A Flexible {C++} Library for Rigorous Numerical Analysis of Dynamical Systems},
  journal   = {Commun. Nonlinear Sci. Numer. Simul.},
  volume    = {16},
  number    = {12},
  pages     = {4587--4607},
  year      = {2011},
  publisher = {Elsevier},
  doi       = {10.1016/j.cnsns.2011.02.012}
}

@article{revol2002mpfi,
  author    = {Revol, Nathalie and Rouillier, Fabrice},
  title     = {Motivations for an Arbitrary Precision Interval Arithmetic and the {MPFI} Library},
  journal   = {Reliab. Comput.},
  volume    = {11},
  number    = {4},
  pages     = {275--290},
  year      = {2005},
  publisher = {Springer},
  doi       = {10.1007/s11155-005-6891-y}
}

@article{rump1999intlab,
  title     = {{INTLAB - INTerval LABoratory}},
  author    = {Rump, Siegfried M},
  journal   = {Dev. Reliab. Comput.},
  pages     = {77--104},
  year      = {1999},
  publisher = {Kluwer Academic Publishers},
  address   = {Dordrecht},
  url       = {https://www.tuhh.de/ti3/rump/intlab/}
}

@book{neumaier1990interval,
  title     = {Interval methods for systems of equations},
  author    = {Neumaier, Arnold and Neumaier, Arnold},
  volume    = {37},
  year      = {1990},
  publisher = {Cambridge university press}
}

@book{moore2009introduction,
  title     = {Introduction to interval analysis},
  author    = {Moore, Ramon E and Kearfott, R Baker and Cloud, Michael J},
  year      = {2009},
  publisher = {SIAM}
}

@article{lohner1987enclosing,
  title     = {Enclosing the solutions of ordinary initial and boundary value problems},
  author    = {Lohner, Rudolf J},
  journal   = {Comput. Arith.},
  pages     = {255--286},
  year      = {1987},
  publisher = {Springer}
}

@article{kashiwagi1995power,
  title   = {Power Series Arithmetic and its Application to Numerical Validation},
  author  = {Kashiwagi, Masahide},
  journal = {Inst. Electron. Inf. Commun. Eng.},
  volume  = {95},
  number  = {296},
  pages   = {1--8},
  year    = {1995}
}

@article{nakao1988numerical,
  title     = {A numerical approach to the proof of existence of solutions for elliptic problems},
  author    = {Nakao, Mitsuhiro T},
  journal   = {Jpn. J. Appl. Math.},
  volume    = {5},
  number    = {2},
  pages     = {313--332},
  year      = {1988},
  publisher = {Springer}
}

@article{oishi1995numerical,
  title     = {Numerical verification of existence and inclusion of solutions for nonlinear operator equations},
  author    = {Oishi, Shin'ichi},
  journal   = {J. Comput. Appl. Math.},
  volume    = {60},
  number    = {1},
  pages     = {171--185},
  year      = {1995},
  publisher = {Elsevier}
}

@article{yamamoto1998numerical,
  title     = {A numerical verification method for solutions of boundary value problems with local uniqueness by Banach's fixed-point theorem},
  author    = {Yamamoto, Nobito},
  journal   = {SIAM J. Numer. Anal.},
  volume    = {35},
  number    = {5},
  pages     = {2004--2013},
  year      = {1998},
  publisher = {SIAM}
}

@article{day2007validated,
  title     = {Validated continuation for equilibria of {PDEs}},
  author    = {Day, Sarah and Lessard, Jean-Philippe and Mischaikow, Konstantin},
  journal   = {SIAM J. Numer. Anal.},
  volume    = {45},
  number    = {4},
  pages     = {1398--1424},
  year      = {2007},
  publisher = {SIAM}
}

@article{plum1991computer,
  title     = {Computer-assisted existence proofs for two-point boundary value problems},
  author    = {Plum, Michael},
  journal   = {Computing},
  volume    = {46},
  number    = {1},
  pages     = {19--34},
  year      = {1991},
  publisher = {Springer}
}

@article{takayasu2013verified,
  title     = {Verified computations to semilinear elliptic boundary value problems on arbitrary polygonal domains},
  author    = {Takayasu, Akitoshi and Liu, Xuefeng and Oishi, Shin'ichi},
  journal   = {Nonlinear Theory Appl.},
  volume    = {4},
  number    = {1},
  pages     = {34--61},
  year      = {2013},
  publisher = {The Institute of Electronics, Information and Communication Engineers}
}

@article{shin2020convergence,
  title   = {On the convergence of PINNs for linear second-order elliptic and parabolic type PDEs},
  author  = {Shin, Younghak and Darbon, J{\'e}r{\^o}me and Karniadakis, George E},
  journal = {arXiv:2004.01806},
  year    = {2020}
}

@article{molinaro2020pinn,
  title   = {Estimates on the generalization error of {PINNs} for approximating {PDE}s},
  author  = {Molinaro, Roberto and Mishra, Siddhartha},
  journal = {arXiv:2007.01138},
  year    = {2020}
}

@article{bonito2024consistent,
  title   = {Convergence and error control of consistent physics informed neural networks for elliptic PDEs},
  author  = {Bonito, Andrea and DeVore, Ronald and Petrova, Guergana and Siegel, John W},
  journal = {arXiv:2406.09217},
  year    = {2024}
}

@article{jiao2022rate,
  title   = {A rate of convergence of physics informed neural networks for the linear second order elliptic PDEs},
  author  = {Jiao, Yuling and others},
  journal = {arXiv:2109.01780},
  year    = {2022}
}

@article{masri2025unified,
  title   = {A unified framework for the error analysis of physics-informed neural networks},
  author  = {Masri, Rami and Mardal, Kent-Andr{\'e} and Zeinhofer, Michael},
  journal = {IMA J. Numer. Anal.},
  year    = {2025}
}

@article{sirignano2018dgm,
  title   = {DGM: A deep learning algorithm for solving partial differential equations},
  author  = {Sirignano, Justin and Spiliopoulos, Konstantinos},
  journal = {J. Comput. Phys.},
  volume  = {375},
  pages   = {1339--1364},
  year    = {2018}
}

@article{e2018deepritz,
  title   = {The Deep Ritz Method: A deep learning-based numerical algorithm for solving variational problems},
  author  = {E, Weinan and Yu, Bing},
  journal = {Commun. Math. Stat.},
  volume  = {6},
  number  = {1},
  pages   = {1--12},
  year    = {2018}
}

@article{e2017deepbsde,
  title   = {Deep learning-based numerical methods for high-dimensional parabolic partial differential equations and backward stochastic differential equations},
  author  = {E, Weinan and Han, Jiequn and Jentzen, Arnulf},
  journal = {Commun. Math. Stat.},
  volume  = {5},
  number  = {4},
  pages   = {349--380},
  year    = {2017}
}

@article{chassagneux2020deepbsde,
  title   = {Convergence of the deep BSDE method for coupled FBSDEs},
  author  = {Chassagneux, Jean-Fran{\c{c}}ois and Crisan, Dan and Delarue, Fran{\c{c}}ois},
  journal = {Probab. Uncertain. Quant. Risk},
  volume  = {5},
  number  = {1},
  pages   = {1--34},
  year    = {2020}
}

@article{lu2021deeponet,
  title   = {Learning nonlinear operators via {DeepONet} based on the universal approximation theorem of operators},
  author  = {Lu, Lu and Jin, Pengzhan and Pang, Guofei and Zhang, Zongyi and Karniadakis, George Em},
  journal = {Nat. Mach. Intell.},
  volume  = {3},
  number  = {3},
  pages   = {218--229},
  year    = {2021}
}

@inproceedings{li2021fno,
  title     = {Fourier Neural Operator for Parametric Partial Differential Equations},
  author    = {Li, Zongyi and Kovachki, Nikola and Azizzadenesheli, Kamyar and Liu, Kaushik and Bhattacharya, Kaushik and Stuart, Andrew and Anandkumar, Anima},
  booktitle = {Int. Conf. Learn. Represent. (ICLR)},
  year      = {2021}
}

@inproceedings{hillebrecht2022certified,
  title     = {Certified machine learning: A posteriori error estimation for physics-informed neural networks},
  author    = {Hillebrecht, Birgit Simone and Unger, Benjamin},
  booktitle = {Int. Jt. Conf. Neural Netw. (IJCNN)},
  pages     = {1--8},
  year      = {2022},
  publisher = {IEEE},
  doi       = {10.1109/IJCNN55064.2022.9892569}
}

@inproceedings{liu2023residual,
  title     = {Residual-based error bound for physics-informed neural networks},
  author    = {Liu, Shuheng and Huang, Xiyue and Protopapas, Pavlos},
  booktitle = {Proc. Conf. Uncertain. Artif. Intell.},
  publisher = {PMLR},
  pages     = {1284--1293},
  year      = {2023},
  volume    = {216}
}

@inproceedings{ahmadova2025lower,
  title     = {Certifying Physics-Informed Neural Networks through Lower Error Bounds},
  author    = {Ahmadova, Arzu and Huseynov, Ismail and Bashirov, Agamirza},
  booktitle = {EurIPS Workshop Differ. Syst. (DiffSys)},
  year      = {2025}
}

\end{document}